\documentclass[conference]{IEEEtran}
\usepackage{textcomp}
\usepackage[utf8]{inputenc}
\usepackage{float}
\usepackage{units}
\usepackage{bm}
\usepackage{amsmath}
\usepackage{amsthm}
\usepackage{amssymb}
\usepackage{graphicx}
\PassOptionsToPackage{normalem}{ulem}
\usepackage{ulem}

\makeatletter

\floatstyle{ruled}
\newfloat{algorithm}{tbp}{loa}
\providecommand{\algorithmname}{Algorithm}
\floatname{algorithm}{\protect\algorithmname}

\theoremstyle{plain}
\newtheorem{thm}{\protect\theoremname}
\theoremstyle{definition}
\newtheorem{defn}[thm]{\protect\definitionname}
\theoremstyle{remark}
\newtheorem{rem}[thm]{\protect\remarkname}
\theoremstyle{plain}
\newtheorem{prop}[thm]{\protect\propositionname}

\IEEEoverridecommandlockouts
\usepackage{cite}
\usepackage{amsfonts}
\usepackage{amsmath}
\usepackage{algorithmic}
\usepackage{textcomp}
\usepackage{xcolor}
\usepackage{graphicx}
\usepackage{epstopdf}
\usepackage{etoolbox}
\makeatletter
\patchcmd{\@makecaption}
  {\scshape}
  {}
  {}
  {}
\makeatletter
\patchcmd{\@makecaption}
  {\\}
  {.\ }
  {}
  {}
\makeatother
\usepackage{balance}
\usepackage{fancyhdr}
\usepackage{bm}
\usepackage{booktabs}
\usepackage{multirow}

\makeatother

\providecommand{\definitionname}{Definition}
\providecommand{\propositionname}{Proposition}
\providecommand{\remarkname}{Remark}
\providecommand{\theoremname}{Theorem}

\begin{document}
\title{Neural Autoregressive Flows\\for Markov Boundary Learning}
\author{\IEEEauthorblockN{Khoa Nguyen\textsuperscript{1}, Bao Duong\textsuperscript{1}, Viet
Huynh\textsuperscript{2}, Thin Nguyen\textsuperscript{1}} \IEEEauthorblockA{\textsuperscript{1}Deakin Applied Artificial Intelligence Initiative,
Geelong, Australia} \IEEEauthorblockA{\textsuperscript{2}Edith Cowan University, Perth, Australia}\IEEEauthorblockA{\{khoa.nguyen, b.duong, thin.nguyen\}@deakin.edu.au, v.huynh@ecu.edu.au}}
\maketitle
\begin{abstract}
Recovering Markov boundary---the minimal set of variables that maximizes
predictive performance for a response variable---is crucial in many
applications. While recent advances improve upon traditional constraint-based
techniques by scoring local causal structures, they still rely on
nonparametric estimators and heuristic searches, lacking theoretical
guarantees for reliability. This paper investigates a framework for
efficient Markov boundary discovery by integrating conditional entropy
from information theory as a scoring criterion. We design a novel
masked autoregressive network to capture complex dependencies. A parallelizable
greedy search strategy in polynomial time is proposed, supported by
analytical evidence. We also discuss how initializing a graph with
learned Markov boundaries accelerates the convergence of causal discovery.
Comprehensive evaluations on real-world and synthetic datasets demonstrate
the scalability and superior performance of our method in both Markov
boundary discovery and causal discovery tasks.
\end{abstract}

\begin{IEEEkeywords}
Markov boundary, Markov blanket, causal discovery, information theoretic
learning, autoregressive modeling, causal feature selection.
\end{IEEEkeywords}

\section{Introduction}

The Markov Blanket, introduced by Pearl \cite{pearl2000causality}
in causal Bayesian networks, is a set $\mathbf{S}$ that captures
the local causal mechanisms for a variable $T$ and statistically
shields $T$ from the rest of the system $\mathbf{V}\backslash\mathbf{S}$:
$T \perp\!\!\!\perp \mathbf{V}\backslash\mathbf{S}\mid\mathbf{S}$
\cite{pearl2000causality,koller2009probabilistic}. The Markov boundary
(MB) is defined as the minimal Markov blanket.\footnote{Some literature refers to this minimal set as the Markov blanket \cite{koller2009probabilistic,yu2021unified}.
To distinguish it from larger sets, we adopt the Markov boundary definition
from \cite{pearl2000causality}, and use the abbreviation MB for Markov
boundary.} MBs are crucial in a wide range of domains, including bioinformatics
\cite{liu2022inferring}, neuroscience \cite{hipolito2021markov},
and environmental science \cite{raffa2022markov}. In the realm of
data science and causality, they are prevailingly applied in causal
feature selection \cite{yu2021unified}, increasingly in missing data
imputation \cite{liu2022improving}, divide-and-conquer causal discovery
\cite{dong2025dcilp}, and LLM reasoning \cite{liu2024discovery}.
Under standard faithfulness assumptions, the MB of a variable is unique
without the need for additional experimental data \cite{tsamardinos2003algorithms};
thus, it is theoretically feasible to design a learning algorithm
$\mathcal{L}^{*}$ that approximates the optimal solution with sufficient
observational data. However, MB learning faces significant challenges
in large graphs with nonlinear dependencies and unknown associated
data distributions, which complicate accurate and scalable inference
\cite{wu2023practical}.

Traditionally, MB learning in the constraint-based approach indirectly
uses pairwise conditional independence (CI) tests. This direction
\cite{tsamardinos2003towards,margaritis1999bayesian,pellet2008using,wang2020towards,tsamardinos2003algorithms}
has a solid theoretical foundation but relies on the accuracy of these
tests, making it susceptible to restricted assumptions about data
distribution, insufficient training samples, and the scale of the
separating set \cite{wu2023practical}. The sequential decision-making
process can lead to cascading errors and ignoring true positives \cite{wu2023practical}. 

Score-based approaches, which evolved later but received limited attention,
identify local causal structures by optimizing a quantifiable objective
\cite{wu2023practical,gao2017efficient}. However, commonly used scoring
criteria, such as BDeu and BDe, impose strict assumptions on the data,
mainly designed for discrete or linear Gaussian data \cite{wu2023practical}.
Recently, KMB \cite{wu2023practical} was proposed to address the
complexity of data relationships. It, however, employs the conditional
covariance operator (CCO), a kernel-based design known for its inefficiency
in terms of sample size and dimensionality \cite{duong2023diffeomorphic}.
The learning mechanism of KMB relies on a heuristic algorithm lacking
theoretical justification and must solve a continuous relaxation-based
objective at each step of evolving its predicted MB. This leads to
inefficiencies in learning MBs for all variables in the network.

Regarding the applicability of MBs in follow-up tasks, we focus on
their integration as a warm-start step in causal discovery (CD) \cite{aliferis2010local,pellet2008using,tsamardinos2003algorithms}.
The key advantage lies in guiding CD within a reduced search space
by eliminating obviously irrelevant relationships \cite{aliferis2010local,tsamardinos2003algorithms}.
From an overarching technical viewpoint, this group has evolved alongside
constraint-based MB learning, which approaches CD by first solving
the local causal induction problem and then adopting a local-to-global
learning strategy. As a subset of constraint-based CD, a critical
component of these techniques is the pairwise CI test, where greater
efficiency reflects a reduction in the number of required trials \cite{aliferis2010local,mokhtarian2025recursive}.
In the emerging field of differentiable causal discovery (DCD), state-of-the-art
techniques employ MB learning to construct the moral graph, which
serves as a mask for subsequent DAG estimation \cite{nazaret2024stable,amin2024scalable}.
SDCD \cite{nazaret2024stable} has demonstrated improved stability
and scalability, particularly for sparse graphs common in real-world
scenarios. However, the moral graphs learned during preselection are
not true MB-derived moral graphs but rather sparser approximations
that eliminate some false positives while avoiding a complete graph,
due to the inherent limitations of DCD \cite{vowels2022d}. 

We address the gaps across these worlds: our method advances a novel
class of autoregressive flow-based generative models, aligning them
with conditional entropy from classic information theory to derive
a theoretically justified scoring objective. We introduce \textit{Any-subset
Masked Autoregressive Flows}, also known as \textbf{\uline{F}}lows
for \textbf{\uline{AN}}y-\textbf{\uline{S}}ubset (\textbf{FANS}),
which efficiently model nonlinear relationships and noise distributions
through latent representation learning, thereby transforming intractable
universal entropy into an easily estimable counterpart. A single trained
\textbf{FANS} can also estimate any conditional entropy $H(T|\mathbf{S})$,
where $T$ is an arbitrary random variable and $\mathbf{S}$ is any
subset of variables. This enables its use in our greedy algorithm,
\textbf{\uline{FANS}}-based \textbf{\uline{I}}nformation ga\textbf{\uline{T}}h\textbf{\uline{E}}ring
for \textbf{\uline{MB}} discovery (\textbf{FANSITEMB}), facilitating
seamless integration into CD pipelines, unlike previous score-based
approaches that require separate models for each variable \cite{wu2023practical}. 

Our main contributions are:
\begin{itemize}
\item \textbf{FANS},\footnote{https://github.com/khoangdadk/FANS} a neural
model redesigned from original autoregressive flows \cite{huang2018neural}.
It combines strengths from both masked autoregressive autoencoder
neural networks \cite{germain2015made} and normalizing flows \cite{papamakarios2021normalizing},
enhancing expressiveness and efficacy in approximating underlying
distributions. With amortized training, \textbf{FANS} can \textit{estimate
conditional entropy of arbitrary variables conditioned on any subsets,
thereby avoiding the computational expense of learning a separate
model for each subset}. Its compact design also reduces parameters,
optimizing computational efficiency when dealing with sparse Bayesian
networks.
\item \textbf{FANSITEMB}, a greedy and parallelizable algorithm for discovering
MBs in polynomial time, supported by a thorough theoretical analysis
of error bounds.
\item \textbf{FANSITE-DCD}, a MB-based differentiable causal discovery method
adapted from SDCD, ensuring learning of DAGs from a minimal I-map-based
subspace.
\item Empirical evidence on the effectiveness of the proposed framework
across various synthetic settings and four real-world gene regulatory
networks.
\end{itemize}

\section{Related Works}\label{sec:RW}

A significant portion of MB discovery focuses on constraint-based
algorithms that identify MBs by mining conditional independence (CI)
relations. As a crucial component of such group, the correctness of
every CI test must be assumed, with $\lambda^{2}$-tests and $G^{2}$-tests
commonly employed for discrete cases, and Fisher’s Z-test used for
continuous cases under the assumption of linear relationships and
additive Gaussian noise. Growing-Shrink (GS) \cite{margaritis1999bayesian}
was the first sound MB learning algorithm, subsequently improved by
the IAMB family \cite{tsamardinos2003towards}. A straightforward
approach, Total Conditioning (TC) \cite{pellet2008using}, states
that $X\in\mathcal{MB}_{Y}$ and $Y\in\mathcal{MB}_{X}$ $\Longleftrightarrow$
$X \not\perp\!\!\!\perp Y \mid \mathbf{V}\backslash\{X,Y\}$. Despite
possessing a theoretical foundation thanks to fine-grained pairwise
testing, these methods require a sample size that is exponential to
the MB size and necessitate $\mathcal{O}(d^{2})$ CI tests, where
each test may involve conditioning on a large variable set. EEMB \cite{wang2020towards}
recently proposed simultaneous learning of the parent-child and spouse
sets, achieving superior accuracy compared to other methods in this
group.

Score-based MB learners are central to curiosity-driven research as
they design specific scoring functions to directly identify the MB
of a target, though the literature remains sparse. The key motivation
stems from the uniqueness of MB under the faithfulness assumption,
making it theoretically feasible to solve a formal objective for obtaining
an optimal solution: the smallest subset representing the MB. This
problem, however, is fundamentally a combinatorial optimization task,
which is NP-hard and computationally intractable in polynomial time.
Regarding scoring criteria, SLL \cite{niinimaki2012local} and S\textsuperscript{2}TMB
\cite{gao2017efficient} utilized BDeu, a decomposable score commonly
used in Bayesian structure learning and specifically tailored for
discrete data. Another widely adopted score, the Minimum Message Length
(MML), was integrated in \cite{li2020moral} and demonstrated to be
locally consistent under specific conditions. Recently, the authors
of KMB \cite{wu2023practical} established that the MB corresponds
to the feature subset minimizing the conditional covariance operator
(CCO) $\Sigma_{TT|\mathbf{S}}$ when embedding variables from a Euclidean
space into a reproducing kernel Hilbert space (RKHS). Formally, both
the target space $\mathcal{T}$ and feature space $\mathcal{S}$ are
mapped into RKHS spaces $\mathcal{H_{T}}$ and $\mathcal{H_{\mathcal{S}}}$
using positive definite kernels. For $\forall g\in\mathcal{H_{T}}$,
$\langle g,\Sigma_{TT|\mathcal{MB}}g\rangle_{\mathcal{H_{T}}}=\langle g,\Sigma_{TT|\mathbf{V}}g\rangle_{\mathcal{H_{T}}}$.
Although kernel-based characterization can capture nonlinear relationships,
their performance is constrained by kernel selection and suffers from
scalability issues. Except for S\textsuperscript{2}TMB \cite{gao2017efficient},
none of these methods guarantee the correctness of their follow-up
algorithms in approximating the MB along with their provided scores.
The heuristic-based search algorithm in KMB \cite{wu2023practical}
must solve its objective with every operation on the evolving MB (i.e.,
removing or adding a node), which is computationally inefficient.

MB-based causal discovery (CD) can be categorized into three approaches.
For constraint-based methods, the state-of-the-art MARVEL \cite{mokhtarian2025recursive}
leverages MB information for recursive learning through removable
variables, reducing the number of CI tests to $\mathcal{O}(d\Delta_{in}^{2}2^{\Delta_{in}^{2}})$,
whereas other constraint-based algorithms require at least $\mathcal{O}(d^{2}\Delta_{in}^{2}2^{\Delta_{in}^{2}})$
(where $d$ and $\Delta_{in}$ are the number of variables and the
maximum in-degree of the causal DAG). An emerging group, including
SDCD \cite{nazaret2024stable} and DAT \cite{amin2024scalable}, employs
MB-derived moral graphs to mask their differentiable CD models. They
transform MB learning into a sparse regression problem to enable expressive
neural network integration. As noted, this process is essentially
a graph pruning step based on the SEM assumption. Finally, another
line of methods, DCILP \cite{dong2025dcilp}, utilizes MBs to partition
the graph into subgraphs, enabling parallelized CD.

Our work advances these fields by proposing a new family of masked
autoregressive flows---a subclass of normalizing flows that has gained
attention for its efficacy in density estimation and variational inference
\cite{huang2018neural,de2020block}. Our models can estimate arbitrary
conditional entropy and work in conjunction with an efficient greedy
search algorithm. Normalizing flows have intersected with information
theory in various studies, particularly in mutual information estimation
\cite{duong2023diffeomorphic}, demonstrating superior accuracy, especially
with continuous random variables. We, in turn, also focus on the continuous
instance of the problem, as noted in \cite{duong2023diffeomorphic},
which is more challenging due to the unknown closed-form of data density.
Furthermore, we demonstrate how our robust MB learners enhance DAG
searching from the perspective of CD.

\section{Methodology}\label{sec:Methodology}

\begin{figure}[h]
\includegraphics[width=1\columnwidth]{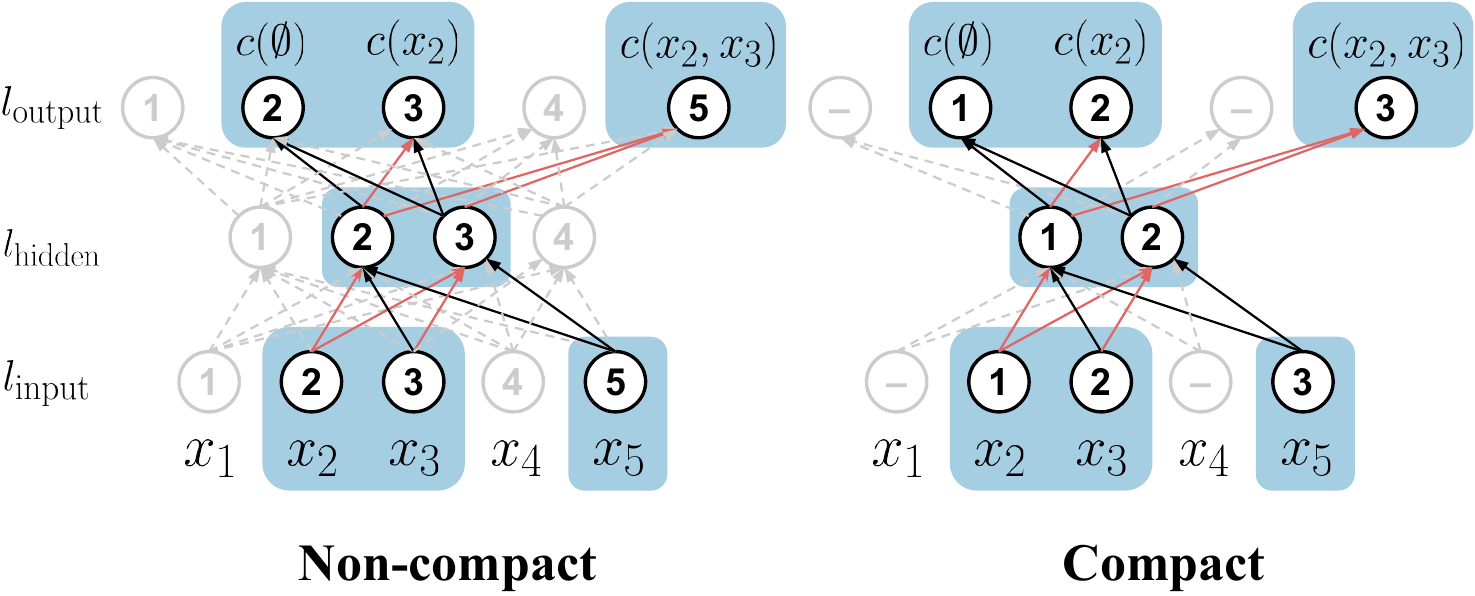}\vspace{0.7cm}

\caption{Illustration of weight masking mechanism in \textbf{FANS}, as described
in Subsection \ref{subsect:asmaf}, for specific subset $\mathbf{S}=\{X_{2},X_{3},X_{5}\}$.
This simplifies the original illustration of MADE architecture for
better visualization, showing only a hidden layer with a block of
nodes and output layer with a block of nodes. Full autoencoder may
consist of multiple hidden layers, each with several blocks of hidden
nodes, which can be permuted, as shown in Fig. 1 from \cite{germain2015made}.
Solid connections represent fully-connected weight regions in Step
1, while red connections indicate autoregressive masks in Step 2.
Other subsets will have different masks. In compact version (right
figure), assuming maximum size of observed subsets is 3, each block
of hidden layer reduces to 2 nodes. Original element indices of $\mathbf{S}$
are rescaled and colliders resolved:\textit{ $[2,3,5]\rightarrow\left[\left\lceil \frac{2*2}{5-1}\right\rceil ,\left\lceil \frac{3*2}{5-1}\right\rceil ,\left\lceil \frac{5*2}{5-1}\right\rceil \right]\rightarrow[1,2,3]$.}}\label{fig:asmaf}
\end{figure}
Regarding notation, we use capitalized letters $X$, $Y$, etc., for
random variables; their bold counterparts $\mathbf{X}$, \textbf{$\mathbf{Y}$},
etc., for sets of variables; lowercase letters $x$, $y$, etc., for
the realizations of those variables; and bold lowercase letters $\mathbf{x}$,
$\mathbf{y}$, etc., for realization vectors of sets. Let us begin
with the formal statement of Markov boundary discovery problem. As
input, we have $N$ $d$-dimensional observations $\mathcal{D}=\left\{ \mathbf{x}^{(k)}\right\} _{k=1}^{N}$,
sampled i.i.d. from an unknown joint distribution \textbf{$\mathbf{P}$}
that is faithful to a corresponding causal Bayesian network (CBN)
$\mathcal{G}$. The goal is to identify the MB for every variable
$X_{i}$.
\begin{defn}[Markov boundary (MB)]
Given a DAG $\mathcal{G}$ and a distribution $\mathbf{P}$ that
are faithful to each other, the MB of each target $X$, denoted as
$\mathcal{MB}_{X}$, is \textit{unique} and consists of its parents
$Pa_{X}$, children $Ch_{X}$, and co-parents (spouses) $Sp_{X}$:
$\mathcal{MB}_{X}=Pa_{X}\cup Ch_{X}\cup Sp_{X}$.
\end{defn}

\subsection{Conditional entropy as a scoring criterion}

We begin by presenting the key observation that underlies the development
of our method.
\begin{thm}
\label{thm:min_hcond}The MB $\mathcal{MB}_{T}\subset\text{\textbf{V}}$
of random variable $T\in\mathbf{V}$ is the subset $\mathbf{S}\subset\mathbf{V}\setminus\{T\}$
that minimizes $H\left(T\mid\mathbf{S}\right)$, where \textbf{$H(\cdot\mid\cdot)$
}is the conditional entropy. Furthermore, $\forall\mathbf{S'}\subseteq\mathbf{V}\setminus\{T\}$
that satisfy $\mathcal{MB}_{T}\subseteq\mathbf{S'}$, we have: $H\left(T\mid\mathbf{S'}\right)=H\left(T\mid\mathcal{MB}_{T}\right)$.
\end{thm}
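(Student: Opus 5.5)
The plan is to obtain both claims from two standard facts: conditioning never increases entropy, and $T\perp\!\!\!\perp\mathbf{Z}\mid\mathbf{S}$ holds exactly when $I(T;\mathbf{Z}\mid\mathbf{S})=0$. Throughout I work with (differential) conditional entropies assumed finite. I read ``minimizes'' as ``is the \emph{smallest} subset attaining the minimum value.'' Indeed, by the second claim every superset of $\mathcal{MB}_T$ ties with it, so the set-level uniqueness has to come from minimality together with faithfulness.

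\textbf{Second claim first.} Let $\mathcal{MB}_T\subseteq\mathbf{S'}\subseteq\mathbf{V}\setminus\{T\}$ and write $\mathbf{Z}=\mathbf{S'}\setminus\mathcal{MB}_T$. Since $\mathcal{MB}_T$ is a Markov blanket, $T\perp\!\!\!\perp\mathbf{V}\setminus(\mathcal{MB}_T\cup\{T\})\mid\mathcal{MB}_T$. By decomposition this gives $T\perp\!\!\!\perp\mathbf{Z}\mid\mathcal{MB}_T$, hence
\[
H(T\mid\mathbf{S'})=H(T\mid\mathcal{MB}_T)-I(T;\mathbf{Z}\mid\mathcal{MB}_T)=H(T\mid\mathcal{MB}_T).
\]
Taking $\mathbf{S'}=\mathbf{V}\setminus\{T\}$ yields $H(T\mid\mathcal{MB}_T)=H(T\mid\mathbf{V}\setminus\{T\})$.

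\textbf{Minimality of the value.} For an arbitrary $\mathbf{S}\subseteq\mathbf{V}\setminus\{T\}$, monotonicity gives $H(T\mid\mathbf{S})\ge H(T\mid\mathbf{V}\setminus\{T\})=H(T\mid\mathcal{MB}_T)$. So $\mathcal{MB}_T$ attains the global minimum.

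\textbf{Characterizing all minimizers (the main obstacle).} This step needs faithfulness. Suppose $H(T\mid\mathbf{S})=H(T\mid\mathcal{MB}_T)$. Then
\[
I(T;\mathbf{V}\setminus(\mathbf{S}\cup\{T\})\mid\mathbf{S})=H(T\mid\mathbf{S})-H(T\mid\mathbf{V}\setminus\{T\})=0,
\]
so $\mathbf{S}$ is itself a Markov blanket of $T$. Under faithfulness the Markov boundary is unique, so it is the unique minimal Markov blanket, and every Markov blanket contains it. I would justify this via the graph: suppose some $X\in\mathcal{MB}_T\setminus\mathbf{S}$. If $X$ is a parent or child of $T$, it is adjacent to $T$ and cannot be d-separated from $T$ by any set. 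If $X$ is a spouse with common child $C$, then either $C\notin\mathbf{S}$, in which case $C$ is an adjacent node outside $\mathbf{S}$ and is excluded by the same argument, or $C\in\mathbf{S}$, in which case the collider path $T\to C\leftarrow X$ is active given $\mathbf{S}$. Either way faithfulness contradicts the independence. Hence $\mathcal{MB}_T\subseteq\mathbf{S}$.

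Consequently the minimizers of $H(T\mid\cdot)$ are exactly the supersets of $\mathcal{MB}_T$, and $\mathcal{MB}_T$ is the unique inclusion-minimal (and smallest-cardinality) minimizer. This proves the theorem. The only delicate points are this faithfulness step and the equivalence between zero conditional mutual information and conditional independence for continuous variables. For the latter I would simply assume densities exist so that the KL characterization applies.
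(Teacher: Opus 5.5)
Your proof is correct, but it takes a different route from the paper's.

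\textbf{The paper's route.} It makes no graphical argument. It imports Theorem 2 of \cite{yu2021unified}: under causal sufficiency and faithfulness, $\mathcal{MB}_T$ is the unique minimal subset maximizing $I(T;\mathbf{S})$. It then uses $I(T;\mathbf{S})=H(T)-H(T\mid\mathbf{S})$ to restate this as ``$\mathcal{MB}_T$ is the unique minimal minimizer of $H(T\mid\mathbf{S})$.'' Finally, it obtains the superset claim from monotonicity, $H(T\mid\mathbf{S}')\le H(T\mid\mathcal{MB}_T)$, combined with the global minimality already imported.

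\textbf{Your route.} You run the logic in the opposite order and keep it self-contained. The superset equality comes first, from the blanket property, decomposition and the chain rule. Global minimality of the value follows from $H(T\mid\mathbf{S})\ge H(T\mid\mathbf{V}\setminus\{T\})$. Uniqueness comes from showing that any minimizer satisfies $I(T;\mathbf{V}\setminus(\mathbf{S}\cup\{T\})\mid\mathbf{S})=0$ and is therefore a Markov blanket. A direct d-separation argument then shows every blanket contains $Pa_T\cup Ch_T\cup Sp_T$: parents and children are adjacent to $T$, and for a spouse $X$ either the common child $C$ lies outside $\mathbf{S}$ (and is excluded by adjacency) or the collider path $T\to C\leftarrow X$ is active given $\mathbf{S}$.

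\textbf{What each buys.} The paper's proof is shorter, but all the substantive content sits in the cited theorem. Yours has four advantages:
\begin{itemize}
\item It shows that only the uniqueness step needs faithfulness; the superset equality and the minimal value need only the Markov property.
\item It characterizes the full set of minimizers as exactly the supersets of $\mathcal{MB}_T$.
\item It correctly reads the main-text phrase ``the subset that minimizes'' as ``the unique inclusion-minimal minimizer,'' which is how the appendix restates the theorem.
\item It flags the right technical assumptions for the continuous setting: finite differential entropies, and zero conditional mutual information being equivalent to conditional independence. The paper's proof tacitly needs these too, for example finiteness of $H(T)$ to pass between maximizing $I(T;\cdot)$ and minimizing $H(T\mid\cdot)$.
\end{itemize}
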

The proof is provided in the Appendix \ref{subsec:proof_a}.

It is worth emphasizing that estimating conditional entropy is equivalent
to estimating the difference between two marginal entropies. In the
special case of Gaussian random vectors, i.e., assuming that the functional
form of the underlying CBN data-generating process is given by a linear
Gaussian additive noise model $y=f(x)+\epsilon$, where $f(x)=\mathbf{w}^{\mathsf{T}}\mathbf{x}+b$
and $\epsilon\sim\mathcal{N}(\mu,\sigma^{2})$, it is straightforward
to derive $H\left(T|\mathbf{S}\right)$ using the entropy of a multivariate
Gaussian distribution. Given $\mathbf{x}\sim\mathcal{N}(\bm{\mu},\bm{\Sigma})$,
$H(\mathbf{x})=\frac{d}{2}\left(1+\log(2\pi)\right)+\frac{1}{2}\log\det\bm{\Sigma}$.
Let $\bm{\Sigma}_{\mathbf{S}}$ represent the sub-matrix of the covariance
matrix $\bm{\Sigma}$ with row and column set \textit{$\mathbf{S}$}.
From $H\left(T|\mathbf{S}\right)=H(T,\mathbf{S})-H(\mathbf{S})$,
we can derive:
\begin{equation}
\begin{aligned}H\left(T|\mathbf{S}\right) & =\frac{1}{2}\left(1+\log(2\pi)+\log\frac{\det\bm{\Sigma}_{\mathbf{\{}T\mathbf{\}\cup\mathbf{S}}}}{\det\bm{\Sigma}_{\mathbf{S}}}\right)\end{aligned}
\label{eq:gauss_ce_trans}
\end{equation}

Practically, the problem is not simple as real-world CBNs may not
adhere to a deterministic data-generating process, rendering their
closed forms intractable. In these situations, relationships between
continuous random variables are often complex and nonlinear, with
unknown endogenous noise distributions. \textbf{\textit{RQ1.}} How
can we estimate the conditional entropy in such cases? Our key idea
is to leverage the expressiveness and tractability of latent representation
learning through \textbf{\textit{masked autoregressive flows}}.

\subsection{Conditional entropy neural estimator design}

In this subsection, we explain how a masked autoregressive flow (MAF)
map continuous random variables from multimodal distributions into
standard Gaussian space, facilitating conditional entropy estimation
for any underlying distribution. We then introduce our novel class
of MAFs tailored for MB discovery: the \textbf{FANS} model.

\subsubsection{Universal conditional entropy estimation}\label{subsec:ce_diffeo}

At the core of our design are MAFs \cite{papamakarios2017masked,huang2018neural,de2020block},
which have gained significant attention in recent years for critical
applications such as density estimation and variational inference.
MAFs harmonize the strengths of normalizing flows \cite{papamakarios2021normalizing}
and autoregressive models \cite{germain2015made}, offering advantages
from both approaches: the former allows compositions of many invertible
transformations with a tractable Jacobian, and the latter facilitates
the integration of expressive neural architectures while maintaining
efficient GPU training. Consequently, MAFs function as \textit{tractable
universal density approximators} with arbitrary non-zero precision
\cite{huang2018neural,papamakarios2017masked,de2020block}. In principle,
autoregressive modeling first decomposes a joint density $p(\mathbf{x})$
into a product of univariate conditionals: $p(\mathbf{x})=\prod_{i}p(x_{i}\mid\mathbf{x_{<i}})$,
where each univariate conditional $p(x_{i}\mid\mathbf{x_{<i}})$ is
transformed into a base density (e.g., a standard Gaussian) via a
conditional diffeomorphic transformation,\footnote{A diffeomorphic transformation is a map $\tau\left(\cdot\right):X\rightarrow X'$
that is differentiable, invertible, and has a differentiable inverse.} parametrized by $\mathbf{x_{<i}}$, $f_{\mathbf{x_{<i}}}:x_{i}\mapsto u_{i}$:
\begin{equation}
p(x_{i}\mid\mathbf{x_{<i}})=p_{u_{i}}\left(u_{i}\right)\left|\det\frac{\partial f_{\mathbf{x_{<i}}}}{\partial x_{i}}\right|\label{eq:nf_formula}
\end{equation}
where $u_{i}=f_{\mathbf{x_{<i}}}(x_{i})=\tau(x_{i},c(\mathbf{x_{<i}}))$.
Two key features stand out: (1) Such transformation can be deepened
by composing multiple single ones: $f=f_{k}\circ\ldots\circ f_{2}\circ f_{1}$,
and (2) The autoregressive decomposition ensures that each dimension
$u_{i}$ of the transformed joint density also follows the base distribution.
The autoregressive conditioner $c$ is implemented as a masked autoencoder
(MADE) \cite{germain2015made}, which takes a vector \textbf{$\mathbf{x}$}
as input and efficiently computes $c(\mathbf{x_{<i}})$ for all conditionals
within a forward pass. The transformer $\tau$ can be an affine transformation:
$u_{i}=\mu+\sigma x_{i}$ \cite{papamakarios2017masked}, where $\mu$
and $\sigma$ are outputs of $c(\mathbf{x_{<i}})$; or a more expressive
transformation: $u_{i}=\bm{\sigma}^{-1}\left(w^{\mathsf{T}}\cdot\bm{\sigma}(a\cdot x_{i}+b)\right)$
\cite{huang2018neural}, where $\bm{\sigma}$ is a Lipschitz continuous
activation function (e.g., sigmoid), and $w$, $a$, $b$ are output
vectors from $c(\mathbf{x_{<i}})$, with $w$ and $a$ being positive
vectors. Notably, the autoregressive structure ensures that the Jacobian
of transformation $\mathcal{F}$ on multivariate inputs is triangular
by design, making its determinant easily computable:
\begin{equation}
\left|\det\frac{\partial\mathcal{F}}{\partial\mathbf{x}}\right|=\left|\prod_{i}\frac{\partial f_{\mathbf{x_{<i}}}}{\partial x_{i}}\right|\label{eq:ar_decom}
\end{equation}
We will briefly denote $\nicefrac{\partial f_{\mathbf{x_{<i}}}}{\partial x_{i}}$
as $\nicefrac{\partial f}{\partial x_{i}}$ for simplicity. 

\textbf{\textsl{Rendering universal conditional entropy tractable.}}
Given the target variable $T\in\mathbf{V}$, consider a diffeomorphism
$\mathcal{F}_{\mathbf{S}}:\mathbf{x_{S}}\mapsto\mathbf{u}_{\mathbf{S}}$,
where $\mathbf{S}\subseteq\mathbf{V}\setminus\{T\}$. We initially
observe:
\begin{equation}
H(\mathbf{S})=H_{\mathbf{u}}(\mathbf{S})-\mathbb{E}_{p(\mathbf{x}_{\mathbf{S}})}\log\left|\det\frac{\partial\mathcal{F}_{\mathbf{S}}}{\partial\mathbf{x_{S}}}\right|\label{eq:diffeo_ce}
\end{equation}
where $\mathbf{x}_{\mathbf{S}}$ represents the vector part w.r.t
$\mathbf{S}$. Assuming every $\mathcal{F}_{\mathbf{S}}$ maps $\mathbf{x_{S}}$\textbf{
}to $\mathbf{u_{S}}$ in an isometric Gaussian space such that $\mathbf{u_{S}}\sim\mathcal{N}(0,\mathbf{I}_{\mathbf{u_{S}}})$,
we can assert that $H_{\mathbf{u}}(\mathbf{S})=\frac{1}{2}\left(1+\log(2\pi)\right)$.
Let\textbf{ $\mathbf{TS}:=\{T\}\cup\mathbf{S}$}; the conditional
entropy $H\left(T|\mathbf{S}\right)$ for the general case is derived
from Eq. \ref{eq:diffeo_ce} as follows:
\begin{equation}
\begin{aligned}H\left(T|\mathbf{S}\right) & =\mathbb{E}_{p(\mathbf{x_{S}})}\log\left|\det\frac{\partial\mathcal{F}_{\mathbf{S}}}{\partial\mathbf{x_{S}}}\right|-\mathbb{E}_{p(\mathbf{x_{TS}})}\log\left|\det\frac{\partial\mathcal{F}_{\mathbf{TS}}}{\partial\mathbf{x_{TS}}}\right|\\
 & =\mathbb{E}_{p(\mathbf{x_{TS}})}\left[\log\left|\det\frac{\partial\mathcal{F}_{\mathbf{S}}}{\partial\mathbf{x_{S}}}\right|-\log\left|\det\frac{\partial\mathcal{F}_{\mathbf{TS}}}{\partial\mathbf{x_{TS}}}\right|\right]\\
 & =\mathbb{E}_{p(\mathbf{x_{TS}})}\left[\sum_{X_{i}\in\mathbf{S}}\log\left|\frac{\partial f_{\mathbf{S}}}{\partial x_{i}}\right|-\sum_{X_{j}\in\mathbf{TS}}\log\left|\frac{\partial f_{\mathbf{TS}}}{\partial x_{j}}\right|\right]
\end{aligned}
\label{eq:general_ce_trans}
\end{equation}
noting that $f_{\mathbf{S}}(x_{i})=\tau(x_{i},c(\mathbf{x_{<i}}))$
with $X_{i}\in\mathbf{S}$ and $\mathbf{X}_{<i}\subset\mathbf{S}$
including only of the variables prior to $X_{i}$ when decomposing
$p(\mathbf{x_{S}})$ into conditionals. Inference from line 2 to line
3 of Eq. \ref{eq:general_ce_trans} is based on the results of Eq.
\ref{eq:ar_decom}. Tractable functions $\mathcal{F}_{\mathbf{S}}$
and $\mathcal{F}_{\mathbf{TS}}$ from trained MAFs, which are inherently
designed for easy partial derivative computation, enable efficient
estimation of $H\left(T|\mathbf{S}\right)$ via Eq. \ref{eq:general_ce_trans}
using Monte Carlo integration on $K$ i.i.d. observations uniformly
sampled from $\mathcal{D}$, where $\mathbf{x}^{(k)}=\left[x_{1}^{(k)},x_{2}^{(k)},\ldots,x_{d}^{(k)}\right]$
$\forall k\in[K]$. Accordingly,
\begin{equation}
H\left(T|\mathbf{S}\right)\approx\frac{1}{K}\sum_{k=1}^{K}\left(\sum_{X_{i}\in\mathbf{S}}\log\left|\frac{\partial f_{\mathbf{S}}}{\partial x_{i}^{(k)}}\right|-\sum_{X_{i}\in\mathbf{TZ}}\log\left|\frac{\partial f_{\mathbf{S}}}{\partial x_{i}^{(k)}}\right|\right)\label{eq:general_ce_est}
\end{equation}

To obtain $\left|\nicefrac{\partial f_{\mathbf{S}}}{\partial x_{i}}\right|$
in Eq. \ref{eq:general_ce_trans}, it can be deduced from the MAF
architecture that access to the univariate conditionals $p(x_{i}\mid\mathbf{x_{<i}})$
is necessary during the autoregressive decomposition via the chain
rule, with $\mathbf{x}_{\mathbf{<i}}$ being the realizations of $\mathbf{X_{<i}}\subset\mathbf{S}$
ordered prior to $X_{i}$ within a specific variable order: this term
essentially represents the residual generated by the change-of-variable
in Eq. \ref{eq:nf_formula}. It has been argued that training an order-agnostic
conditioner $c$ is theoretically feasible and can provide access
to all conditionals from any order \cite{germain2015made}, given
that each order does not shuffle between stacks in normalizing flows.
This scenario, however, is empirically inefficient, as stated by the
authors in \cite{shih2022training}; thus, it is evidently not employed
by typical MAFs \cite{papamakarios2017masked,de2020block,huang2018neural}
in density estimation, particularly in high-dimensional spaces. For
this reason, we focus on the default setting of these MAFs in \cite{papamakarios2017masked,de2020block,huang2018neural},
which preserves the original variable order of input $\pi(d)=\left[1,2,\ldots,d\right]$
during training. As a trade-off, this design inherently restricts
the model to a single order and can only derive conditionals $p(x_{i}\mid\mathbf{x_{<i}})$
from $\pi(d)$. 

Given a fixed order $\pi(d)$, an alternative solution is to obtain
conditionals not from the joint but separately from marginals corresponding
to subsets of $\mathbf{V}$ \cite{shih2022training}. An arbitrary
subset $\mathbf{S}$ can be rearranged to form a discontinuous sub-sequence
of $\pi(d)$, denoted as $\pi_{\mathbf{S}}$, from which the marginal
for $\mathbf{S}$ can be decomposed to derive $\left|\nicefrac{\partial f_{\mathbf{S}}}{\partial x_{i}}\right|$.
A single MAF can be adapted to align with this intuition through a
minor architectural modification: Given $\mathbf{S}$, we mask all
$X_{i}\not\in\mathbf{S}$ in input $\mathbf{x}$ to zeros, then feed
this into MADE conditioner to produce a chain of $c(\mathbf{x_{<i}})$.
We assume that zero-masked positions in $\pi(d)$ contribute nothing
to $c(\mathbf{x_{<i}})$, so this term embeds only information of
$\mathbf{X_{<i}}\subset\mathbf{S}$ that precedes $X_{i}$ in $\pi_{\mathbf{S}}$.
Nonetheless, MADE \cite{germain2015made} initially uses only a portion
of the weights of its neural network (NN) for the joint input. Depending
on this fixed pre-masking, this straightforward approach fails to
fully utilize complete NN for all marginals, as empirically demonstrated
in Fig. \ref{fig:ce_options} for typical MAFs \cite{huang2018neural,de2020block}.

\subsubsection{Efficient redesign of Any-Subset Masked Autoencoders}\label{subsect:asmaf}

Building on Subsection \ref{subsec:ce_diffeo}, we aim to enable the
conditioner $c$ to efficiently compute $c(\mathbf{x_{<i}})$ for
any $X_{i}\in\mathbf{S}$, where $\mathbf{S}\subseteq\mathbf{V}\setminus\{T\}$,
and $\mathbf{x_{<i}}$ represents the realizations of $\mathbf{X_{<i}}$
preceding $X_{i}$ under order $\pi(d)$. For a sample $\mathbf{x}^{(k)}$
and subset $\mathbf{S}$, let $x_{\mathbf{S}}^{(k)}=\left[x_{i_{1}}^{(k)},\ldots,x_{i_{|\mathbf{S}|}}^{(k)}\right]$
denote the non-zero elements in the masked vector (with zero values
for $X_{i}\not\in\mathbf{S}$). We achieve this goal using a modified
version of MADE capable of processing any $x_{\mathbf{S}}^{(k)}$,
efficiently allocating neural autoencoder weights, propagating $x_{\mathbf{S}}^{(k)}$
through this weight region, and outputting $|\mathbf{S}|$ autoregressive
terms: ${\small c(\emptyset)}$, ${\small c\left(x_{i_{1}}^{(k)}\right)}$,...,
${\small c\left(x_{i_{1}}^{(k)},\ldots,x_{i_{|\mathbf{S}|-1}}^{(k)}\right)}$.
\textbf{\textit{RQ2.}} How can this modification be implemented?

\textbf{FANS} employs a robust dynamic weight masking mechanism, wherein
the weight regions of the autoencoder are adaptively assigned to each
subset $\mathbf{S}$. We draw on concepts from the Lottery Ticket
Hypothesis \cite{frankle2018lottery,chen2023structured} and various
pruning strategies, which identify a subset of valuable model connections:
A masked neural network with more connections demonstrates greater
expressiveness, enabling improved and faster learning from data. Intuitively,
we establish two objectives for our weight assignment/masking strategy:
(1) to greedily maximize the number of connections utilized across
the entire neural autoencoder for all subsets $\mathbf{S}$, and (2)
to ensure that the overlap between two subsets results in a greater
shared connection in their respective masks/weight regions. As illustrated
in Fig. \ref{fig:asmaf}, \textbf{FANS} masking mechanism operates
through two steps:\textbf{\textit{ Step 1.}}\textit{ Locate the fully-connected
weight region}: For subset $\mathbf{S}$, we redefine $\pi_{\mathbf{S}}=\left[i_{1},i_{2},\ldots,i_{|\mathbf{S}|}\right]$,
where each $i_{j}\in\left[1,d\right]$, as its increasing list of
variable positions---a discontinuous subsequence of $\pi(d)$. Each
hidden node in layer $l$ is assigned a score $s\in\left[1,d-1\right]$,
with selected nodes having scores $s\in\pi_{\mathbf{S}}\backslash\{i_{|\mathbf{S}|}\}$.
Similarly, nodes in the input and output layers have scores $s\in\left[1,d\right]$,
with selected nodes having scores $s\in\pi_{\mathbf{S}}$. We only
select connections between these chosen nodes across layers.\textit{
}\textbf{\textsl{Step 2}}\textbf{\textit{.}}\textit{ Force autoregressive
masking}: The fully-connected weight region between two consecutive
layers is masked autoregressively, similar to MADE, ensuring that
a higher-layer node (considered from input to output) with score $s_{h}$
receives information from lower-layer nodes with scores $s_{l}\leq s_{h}$.
The exception is from the highest hidden layer to the output layer,
where $s_{l}<s_{h}$. 

\textbf{\textsl{Enabling compactness.}} For the MB discovery task,
the MB size is often considerably smaller than the total number of
variables, particularly in large sparse causal networks, leading to
the intuition that investigating only subsets $\mathbf{S}$ with maximum
size $\max|\mathbf{S}|\ll d$ is sufficient. Let $M$ denote a hyperparameter
related to the maximum size of observed subsets. Each hidden layer
of the \textbf{compacted FANS} requires approximately $\alpha_{B}(M-1)$
nodes instead of $\alpha_{B}(d-1)$ nodes, where $\alpha_{B}$ is
the number of blocks in each hidden layer (see Fig. \ref{fig:asmaf}
for details), due to the reduced size of $\pi_{\mathbf{S}}$. With
$\pi_{\mathbf{S}}=\left[i_{1},i_{2},\ldots,i_{|\mathbf{S}|}\right]$,
each hidden node from \textsl{Step 1} would have a score $s$ that
satisfies $s\in[1,M-1]\cap\pi_{\mathbf{S}}^{\%}$, where $\pi_{\mathbf{S}}^{\%}=\left[\left\lceil \frac{(M-1)i_{1}}{d-1}\right\rceil ,\ldots,\left\lceil \frac{(M-1)i_{|\mathbf{S}|}}{d-1}\right\rceil \right]$
is the rescaled counterpart of $\pi_{\mathbf{S}}$ from $d$-scale
to $M$-scale. Rescaling may result in overlapping elements in $\pi_{\mathbf{S}}^{\%}$,
leading to $|\pi_{\mathbf{S}}^{\%}|<|\mathbf{S}|-1$, while at least
$|\mathbf{S}|-1$ nodes are needed within each block of a hidden layer
to carry autoregressive information $(x_{i_{1}})$, $(x_{i_{1}},x_{i_{2}})$,...,
$(x_{i_{1}},\ldots,x_{|\mathbf{S}|-1})$. These colliders can be addressed
by greedily adjusting element values in $\pi_{\mathbf{S}}^{\%}$ to
fill possible gaps between them. For example, given $M-1=6$ and $\pi_{\mathbf{S}}^{\%}=[1,1,2,6,6]$,
the values in $\pi_{\mathbf{S}}^{\%}$ are adjusted from both sides
as follows: $[1,1,2,6,6]\rightarrow[1,2,3,6,6]\rightarrow[1,2,3,5,6]$.
The efficacy of the compact version is demonstrated in Fig. \ref{fig:ce_options}.

\textbf{\textit{Training objective with FANS masking mechanism.}}
In alignment with the training objective for density estimation in
other MAF models, \textbf{FANS} is trained by maximizing the total
log likelihood on training data, effectively fitting $p_{\mathbf{}}(\mathbf{x})$
through the diffeomorphic transformation in Eq. \ref{eq:nf_formula}.
The base distribution is chosen to be standard Gaussian. Notably,
data samples are stochastically masked using a uniform $d$-dimensional
binary mask distribution to fit marginals corresponding to subsets
$\mathbf{S}$. Let $\mathcal{M}$ denote the uniform mask distribution,
$M_{\mathbf{S}}\sim\text{\ensuremath{\mathcal{M}}}$ represent the
binary mask for $\mathbf{S}$, and $\mathbf{x}_{M_{\mathbf{S}}}:=M_{\mathbf{S}}\odot\mathbf{x}$.
We maximize the log likelihood of this term using training data in
$\mathcal{D}$:
\begin{align}
\mathbb{E}_{\substack{\mathbf{x\sim}p(\mathbf{x})\\
M_{\mathbf{S}}\sim\mathcal{M}
}
}\left[\log p_{\mathbf{u}}\left(f_{\theta}(\mathbf{x}_{M_{\mathbf{S}}})\odot M_{\mathbf{S}}\right)+\sum_{X_{i}\in\mathbf{S}}\log\left|\frac{\partial f_{\theta}(\mathbf{x}_{M_{\mathbf{S}}})}{\partial x_{i}}\right|\right]
\end{align}

\begin{rem}
Sampling a mask $M_{\mathbf{S}}$ for $\mathbf{S}$ to fit marginal
$p(\mathbf{x_{S}})$ also implicitly fits marginals $p(x_{\mathbf{S_{-1}}})$,
$p(x_{\mathbf{S_{-2}}})$, ... of its prefix subsets $\mathbf{S}_{-1}$,
$\mathbf{S}_{-2}$, ..., where $\pi_{\mathbf{S}}=\left[i_{1},\ldots,i_{|\mathbf{S}|}\right]$
and $\pi_{\mathbf{S}_{-k}}=\left[i_{1},\ldots,i_{|\mathbf{S}|-k}\right]$.
This is due to $p(\mathbf{x_{S}})=p(\mathbf{x_{S_{-1}}})p(x_{i_{|\mathbf{S}|}}|\mathbf{x_{S_{-1}}})=p(\mathbf{x_{S_{-2}}})p(x_{i_{|\mathbf{S}|-1}}|\mathbf{x_{S_{-2}}})p(x_{i_{|\mathbf{S}|}}|\mathbf{x_{S_{-1}}})=\ldots$
Define a \textit{leaf-subset}\textbf{ }$\mathbf{S}_{\text{leaf}}$
as a subset that is not a prefix of any other subset except itself
(e.g., leaf node $[3,4]$ in Fig. 2(c) of \cite{shih2022training}).
Uniformly sampling \textit{leaf-masks} $M_{\mathbf{S}_{\text{leaf}}}$
is sufficient, reducing the combinatorial space of masks by half (empirically
shown in Fig. \ref{fig:ce_options}). Additionally, partially observing
subsets $\mathbf{S}$ of size $|\mathbf{S}|\leq M\ll d$ is sufficient
for sparse graphs. Combining these insights, our masking strategy
efficiently operates within a small fraction of the exponential masking
space.
\end{rem}

\subsection{Polynomial-time greedy minimization}\label{subsec:digmb}

Optimizing objectives with set-based solutions is an NP-hard combinatorial
problem \cite{chen2015sequential,das2011submodular}. To address this
in polynomial time, two common approaches are (1) continuous relaxation
\cite{wu2023practical}, which learns a differentiable subset mask,
and (2) greedy search \cite{das2011submodular,chen2015sequential,chen2018weakly,sharma2015greedy},
which iteratively refines the predictive set. We adopt the latter
due to the complex implementation of continuous relaxation in our
high-dimensional settings and the lack of explicit control from challenges
in regularization parameter tuning \cite{sharma2015greedy,das2011submodular}.
Accordingly, we adopt a greedy approach, which is simple yet efficient.
We reformulate MB discovery problem as a \textbf{\textit{sequential
information minimization}} task \cite{chen2015sequential,das2011submodular},
a framework demonstrated to perform well within information-theoretic
learning paradigms and is theoretically near-optimal under certain
noise assumptions \cite{chen2015sequential,sharma2015greedy}. Thus,
it provides a solid foundation for our error-bound theorem (see our
Theorem \ref{thm:Bound}).

We introduce \textbf{FANSITEMB} (Algorithm \ref{algo:naigmb}), which
guarantees convergence to a minimal set $\mathbf{S}^{*}$ predicting
the true $\mathcal{MB}_{X_{i}}$ by minimizing $H(X_{i}\mid\mathbf{S})$
(see our Theorem \ref{thm:soundness-cemb}). In the optimal case,
the \textit{growing phase} can yield the set $\tilde{\mathcal{MB}}_{X_{i}}^{+}$
satisfying $\mathcal{MB}_{X_{i}}\subseteq\tilde{\mathcal{MB}}_{X_{i}}^{+}$,
where $H(X_{i}\mid\tilde{\mathcal{MB}}_{X_{i}}^{+})=H(X_{i}\mid\mathcal{MB}_{X_{i}})=\min_{\mathbf{S}\subseteq\mathbf{V}\setminus\{X_{i}\}}H(X_{i}\mid\mathbf{S})$
(Theorem \ref{thm:min_hcond}). The \textit{shrinking phase} then
removes redundant variables from $\tilde{\mathcal{MB}}_{X_{i}}^{+}$
with minimal impact on $H(X_{i}\mid\tilde{\mathcal{MB}}_{X_{i}}^{+})$.
For discovering MBs of all variables in a causal network, \textbf{FANSITEMB}
can operate in a \textit{distributed parallel setup}, running independently
for each target variable, followed by symmetry correction: $X_{i}\in\mathcal{MB}_{X_{j}}\Leftrightarrow X_{j}\in\mathcal{MB}_{X_{i}}$.
Additionally, it can exploit \textit{GPU parallelism} by batch-computing
$H(X_{i}|\mathbf{S}\cup\{X_{j}\})$ for all candidate variables $X_{j}$
during each step of the growing and shrinking phases. This ensures
a computational complexity of $\mathcal{O}(bM)$ for inferring MBs
of all variables simultaneously, where $b$ is the batch count and
$M$ is the estimated maximum size of an MB, as defined. Compared
to the closely related KMB \cite{wu2023practical}, which employs
an iterative MB discovery strategy and requires solving its training
objective at least $M$ times per target $X_{i}$, \textbf{FANSITEMB}
is significantly more efficient.
\begin{thm}[Error bound of \textbf{FANSITEMB}]
\label{thm:Bound}Given that $\mathcal{F}_{\theta}^{*}$ is a universal
approximator in the model class of \textbf{FANS}, it can approximate
any conditional entropy $H(X_{i}\mid\mathbf{S})$ from Eq. \ref{eq:general_ce_trans}
with arbitrary precision, where $X_{i}\in\mathbf{V}$ and $\mathbf{S}\subseteq\mathbf{V}\backslash\{X_{i}\}$,
with $\mathbf{V}$ being the set of all variables under a faithful
distribution. Assuming the growing phase of Algorithm \ref{algo:naigmb}
stops at a subset $\tilde{\mathcal{MB}}_{X_{i}}^{+}\subset\mathbf{V}\backslash\{X_{i}\}$
where $\lvert\tilde{\mathcal{MB}}_{X_{i}}^{+}\rvert=k'$, let $\mathcal{MB}_{X_{i}}$
denote the true MB of $X_{i}$, referred to as the optimal solution
of Algorithm \ref{algo:naigmb}, with $\lvert\mathcal{MB}_{X_{i}}\rvert=k$.
In the general case, for any $\delta>0$, an estimation bound can
be directly derived from Theorem 2 in \cite{chen2015sequential}:
\[
0\leq H(X_{i}\mid\tilde{\mathcal{MB}}_{X_{i}}^{+})-H(X_{i}\mid\mathcal{MB}_{X_{i}})\leq\delta_{e}+\Delta_{N}
\]

where $\Delta_{N}:=\log N\left[1-\exp\left(-\frac{k'}{k\gamma\max\{\log N,\log(1/\delta_{e})\}}\right)\right]$,
and $\gamma$ is a constant that depends on the measurement noise
from $N$ samples. In the case of linear Gaussian additive noise,
we can also derive a bound on the ratio between exponential terms
as follows:
\[
1\leq\frac{2\pi e\sigma_{X_{i}}^{2}-e^{2H(X_{i}\mid\mathcal{MB}_{X_{i}})}}{2\pi e\sigma_{X_{i}}^{2}-e^{2H(X_{i}\mid\tilde{\mathcal{MB}}_{X_{i}}^{+})}}\leq\frac{1}{1-e^{-\lambda_{\min}(\mathbf{C},2k')}}
\]

where $\sigma_{X_{i}}$ is the standard deviation of $X_{i}$ and
$\lambda_{\min}(\mathbf{C},2k')$ is the smallest eigenvalue among
all $2k'\times2k'$ sub-matrices of the joint covariance matrix after
standardization, i.e., the correlation matrix $\mathbf{C}$. 
\end{thm}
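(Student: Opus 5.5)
The theorem has two parts, and I would handle them separately.

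\textbf{Lower bounds.} In both parts the lower bound is immediate. By Theorem \ref{thm:min_hcond}, $\mathcal{MB}_{X_i}$ minimizes $H(X_i\mid\mathbf{S})$ over all $\mathbf{S}\subseteq\mathbf{V}\setminus\{X_i\}$. Hence $H(X_i\mid\tilde{\mathcal{MB}}_{X_i}^{+})-H(X_i\mid\mathcal{MB}_{X_i})\geq 0$. For the ratio, note that for a Gaussian $X_i$ we have $e^{2H(X_i\mid\mathbf{S})}=2\pi e\,\sigma^2_{X_i\mid\mathbf{S}}\le 2\pi e\,\sigma^2_{X_i}$. So both the numerator and the denominator are nonnegative, and the numerator is at least the denominator. (If the denominator vanishes, the bound is vacuous or should be read as a limit.)

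\textbf{General upper bound.} I would reduce it to the sequential information maximization framework of \cite{chen2015sequential}. Define the utility $f(\mathbf{S}):=H(X_i)-H(X_i\mid\mathbf{S})=I(X_i;\mathbf{S})$, which is monotone and satisfies $f(\emptyset)=0$. The growing phase of Algorithm \ref{algo:naigmb} is exactly the greedy rule that adds $\arg\max_{X_j}I(X_i;X_j\mid\mathbf{S})$, since this equals $\arg\min_{X_j}H(X_i\mid\mathbf{S}\cup\{X_j\})$. The universal-approximation hypothesis on $\mathcal{F}^*_\theta$ lets us treat the FANS estimates as exact oracle values up to an arbitrarily small error, which I would absorb into $\delta_e$. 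Theorem 2 of \cite{chen2015sequential} then compares the greedy value after $k'$ steps with the optimal $k$-set. The comparison involves the factor $1-\exp(-k'/(k\gamma\max\{\log N,\log(1/\delta_e)\}))$ times the maximal attainable information, which is bounded by $\log N$ for $N$ samples, plus the additive error $\delta_e$. The optimal $k$-set is $\mathcal{MB}_{X_i}$ by Theorem \ref{thm:min_hcond}. Translating the guarantee on $I$ back to $H$ via $H(X_i\mid\cdot)=H(X_i)-I(X_i;\cdot)$ yields $\delta_e+\Delta_N$. This step is mostly bookkeeping; the constant $\gamma$ is inherited directly as the noise parameter of that theorem.

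\textbf{Linear-Gaussian upper bound.} Here I would use the weak-submodularity machinery of \cite{das2011submodular}. Work with standardized variables, so that $R^2(\mathbf{S})=1-\sigma^2_{X_i\mid\mathbf{S}}/\sigma^2_{X_i}$. By Eq.~\eqref{eq:gauss_ce_trans}, $2\pi e\sigma^2_{X_i}-e^{2H(X_i\mid\mathbf{S})}=2\pi e\sigma^2_{X_i}R^2(\mathbf{S})$, so the ratio in the statement is exactly $R^2(\mathcal{MB}_{X_i})/R^2(\tilde{\mathcal{MB}}_{X_i}^{+})$. The greedy growth minimizing conditional variance is forward regression. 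Das and Kempe show that $R^2$ has submodularity ratio $\gamma_{\mathbf{S},k}\ge\lambda_{\min}(\mathbf{C},k+|\mathbf{S}|)$, and hence that greedy selection attains $R^2(\mathbf{S}_{\text{greedy}})\ge(1-e^{-\gamma})\,\mathrm{OPT}$. Applying this with the greedy set of size $k'$, comparing against the optimum of size $k\le k'$, and bounding the relevant ratio below by $\lambda_{\min}(\mathbf{C},2k')$ (since $k+k'\le 2k'$ and $\lambda_{\min}$ decreases with matrix size) gives the stated bound.

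\textbf{Main obstacle.} I expect the delicate point to be making sure the cited theorems apply verbatim. The comparison set must be the true MB, which needs $k\le k'$, or else the $k'/k$ factor must be handled via the standard generalized greedy argument. In addition, the estimation error of FANS must enter only through $\delta_e$. I would first verify the $k'\ge k$ case and then note that the exponent formula covers general $k'$.
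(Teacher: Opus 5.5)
Your overall route matches the paper's. You get the lower bounds from Theorem~\ref{thm:min_hcond} and the general bound from Theorem~2 of \cite{chen2015sequential}, applied to $I(X_i;\cdot)=H(X_i)-H(X_i\mid\cdot)$. The Gaussian bound comes from $A/A'=R^2(\mathcal{MB}_{X_i})/R^2(\tilde{\mathcal{MB}}^{+}_{X_i})$ together with forward regression in \cite{das2011submodular}. The Gaussian part is the same as the paper's.

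Your $k\le k'$ caveat is a real improvement that the paper omits. Theorem~3.2 of \cite{das2011submodular} compares against the best set of the same cardinality, and for $k'<k$ the stated bound can actually fail. Take $X_i=\sum_{j=1}^{k}P_j+\sigma\epsilon$ with independent standard $P_j$, and a growing phase that stops at $k'=1$ (possible with $\rho=0$ and moderate $\epsilon_g$). The ratio is then exactly $k$. Meanwhile $\lambda_{\min}(\mathbf{C},2)\ge 1-1/\sqrt{k}$, which keeps the right-hand side below $2.6$ for $k\ge 4$.

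The general case is where your argument has gaps.

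First, you skip the one substantive step the paper adds. The result of \cite{chen2015sequential} concerns Shannon information about a discrete hypothesis whose entropy is at most $\log N$. The paper bridges to it with the limiting density of discrete points (Cover--Thomas, Thm.~8.3.1). It replaces $H(X_i\mid\mathbf{S})$ by the discretized $H_{\mathbb{S}}(X_i\mid\mathbf{S})$ and applies the citation to $I_{\mathbb{S}}$. You apply the theorem directly to differential mutual information and assert it is at most $\log N$ ``for $N$ samples.'' That is false for continuous variables: in the example above, $I(X_i;\mathcal{MB}_{X_i})=\tfrac12\log\frac{k+\sigma^2}{\sigma^2}\to\infty$ as $\sigma\to0$. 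Some discretization or empirical-distribution reduction is needed before the citation applies at all, and even the paper's bridge holds only approximately.

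Second, the step you call bookkeeping does not produce the printed $\Delta_N$. Suppose the guarantee has the form you describe: $I(X_i;\tilde{\mathcal{MB}}^{+}_{X_i})\ge(1-e^{-x})\,I(X_i;\mathcal{MB}_{X_i})-\delta_e$, with $x=k'/(k\gamma\max\{\log N,\log(1/\delta_e)\})$. Then the entropy gap is at most $e^{-x}I(X_i;\mathcal{MB}_{X_i})+\delta_e\le\delta_e+e^{-x}\log N$, not $\delta_e+(1-e^{-x})\log N$. Two further signs that the printed form is off:
\begin{itemize}
\item It grows with $k'$, although the greedy sets are nested and the gap can only shrink.
\item It is incompatible with the paper's own remark that $k'\ge k\gamma\max\{\log N,\log(1/\delta_e)\}\ln(\log N/\delta_e)$ gives a gap of at most $2\delta_e$. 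Under that condition the printed $\Delta_N$ is at least $\log N-\delta_e$, whereas $e^{-x}\log N\le\delta_e$ is exactly what the remark needs.
\end{itemize}
The paper's proof simply asserts the printed expression from the citation, so you will not find a derivation of it there. Carried out honestly, your translation yields the $e^{-x}$ form.
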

We provide the proof in the Appendix \ref{subsec:proof_b}.

If $k'\geq k\gamma\max\left\{ \log N,\log\frac{1}{\delta_{e}}\right\} \ln\left(\frac{\log N}{\delta_{e}}\right)$,
then $H(X_{i}\mid\tilde{\mathcal{MB}}_{X_{i}}^{+})$ $-$ $H(X_{i}\mid\mathcal{MB}_{X_{i}})\leq2\delta_{e}$
\cite{chen2015sequential}. Hence, up to $\delta_{e}$ in absolute
terms, we can approach the optimal MB achievable within $k=\lvert\mathcal{MB}_{X_{i}}\rvert$
steps by expanding $\tilde{\mathcal{MB}}_{X_{i}}^{+}$ sufficiently,
within a logarithmic factor of $k$. In the linear Gaussian case,
additional insights emerge: for sparse networks, where variables exhibit
weaker correlations or independence, $\lambda_{\min}(\mathbf{C},2k')$
tends to be larger, yielding a tighter upper bound.
\begin{thm}[Soundness of \textbf{FANSITEMB}]
\label{thm:soundness-cemb}Under faithfulness, causal sufficiency,
and $\mathcal{F}_{\theta}^{*}$ as defined in Theorem \ref{thm:Bound},
Algorithm \ref{algo:naigmb} discovers all and only variables in the
MB of the target node.
\end{thm}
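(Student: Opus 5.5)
The plan is a Grow--Shrink style soundness argument, run with exact conditional entropies. By the assumption on $\mathcal{F}_{\theta}^{*}$ in Theorem \ref{thm:Bound}, every quantity $H(X_{i}\mid\mathbf{S})$ queried by Algorithm \ref{algo:naigmb} is available with arbitrary precision. So I would first reduce to the idealized setting in which the algorithm compares true conditional entropies. The comparison is then translated into conditional independence via the identity
\[
H(X_{i}\mid\mathbf{S})-H(X_{i}\mid\mathbf{S}\cup\{X_{j}\})=I(X_{i};X_{j}\mid\mathbf{S})\geq0 .
\]
This quantity is zero iff $X_{i}\perp\!\!\!\perp X_{j}\mid\mathbf{S}$. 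Under faithfulness and causal sufficiency, conditional independence in $\mathbf{P}$ coincides with d-separation in $\mathcal{G}$. Hence each greedy add or remove step becomes a d-separation query.

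\textbf{Growing phase.} The phase adds a candidate whenever it strictly decreases $H(X_{i}\mid\mathbf{S})$, and it stops at $\tilde{\mathcal{MB}}_{X_{i}}^{+}$ when no candidate does. At that point $I(X_{i};X_{j}\mid\tilde{\mathcal{MB}}_{X_{i}}^{+})=0$ for every remaining $X_{j}$. I would show this forces $\mathcal{MB}_{X_{i}}\subseteq\tilde{\mathcal{MB}}_{X_{i}}^{+}$, arguing by contradiction. Suppose some $Y\in\mathcal{MB}_{X_{i}}$ is missing.
\begin{itemize}
\item If $Y$ is a parent or child of $X_{i}$, then $Y$ is d-connected to $X_{i}$ given any set. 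Faithfulness then gives positive conditional mutual information, a contradiction.
\item If $Y$ is only a spouse, I would argue instead on the full set $\mathbf{W}=\mathbf{V}\setminus\{X_{i}\}$. By Theorem \ref{thm:min_hcond} and the chain rule, termination yields $H(X_{i}\mid\tilde{\mathcal{MB}}_{X_{i}}^{+})=H(X_{i}\mid\mathbf{W})$: if the remaining variables jointly carried information, the chain rule would express it as a sum of single-variable conditional informations, one of which would be positive at some stage. Then $\tilde{\mathcal{MB}}_{X_{i}}^{+}$ is a Markov blanket. Under faithfulness the Markov boundary is the unique minimal blanket and is contained in every blanket (the intersection property holds for faithful positive distributions). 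So $\mathcal{MB}_{X_{i}}\subseteq\tilde{\mathcal{MB}}_{X_{i}}^{+}$.
\end{itemize}

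The main obstacle sits in this second case. The chain-rule step does not directly follow from single-variable zeros: it requires showing that $I(X_{i};\mathbf{W}\setminus\tilde{\mathcal{MB}}_{X_{i}}^{+}\mid\tilde{\mathcal{MB}}_{X_{i}}^{+})>0$ implies some single $X_{j}$ with $I(X_{i};X_{j}\mid\tilde{\mathcal{MB}}_{X_{i}}^{+})>0$. I would prove this with faithfulness, as in the GS/IAMB soundness proofs. Suppose the blanket is not separating. Then some parent, child, or spouse outside the set is d-connected to $X_{i}$ given the set. For a spouse $Y$, its common child $C$ is already in the set because children are always added. Conditioning on $C$ opens the collider path $X_{i}\to C\leftarrow Y$, so $Y$ is d-connected and gives a positive single-variable gain. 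Getting this argument to go through without further regularity assumptions is the delicate part.

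\textbf{Shrinking phase.} This phase removes $X_{j}$ from $\tilde{\mathcal{MB}}_{X_{i}}^{+}$ when $H(X_{i}\mid\tilde{\mathcal{MB}}_{X_{i}}^{+}\setminus\{X_{j}\})=H(X_{i}\mid\tilde{\mathcal{MB}}_{X_{i}}^{+})$, i.e. when $X_{i}\perp\!\!\!\perp X_{j}\mid\tilde{\mathcal{MB}}_{X_{i}}^{+}\setminus\{X_{j}\}$. Two facts give soundness.
\begin{itemize}
\item For $X_{j}\notin\mathcal{MB}_{X_{i}}$, the remaining set still contains $\mathcal{MB}_{X_{i}}$. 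By the second part of Theorem \ref{thm:min_hcond}, the entropy is unchanged, so $X_{j}$ is removed. This stays true after earlier removals, since the invariant $\mathcal{MB}_{X_{i}}\subseteq$ current set is preserved.
\item For $X_{j}\in\mathcal{MB}_{X_{i}}$, the remaining set still contains all other members of $\mathcal{MB}_{X_{i}}$. By the d-connection argument above (parents and children are always adjacent, and spouses are d-connected given the common child), faithfulness gives a strict entropy increase, so $X_{j}$ is never removed.
\end{itemize}
Hence the output equals $\mathcal{MB}_{X_{i}}$, i.e. all and only its variables.
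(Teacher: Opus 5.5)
Your proof is correct in substance, but it takes a different route from the paper. The paper argues only at the level of the entropy score. It obtains $\mathcal{MB}_{X_i}\subseteq\tilde{\mathcal{MB}}^{+}_{X_i}$ by invoking Theorem~\ref{thm:Bound} (the growing phase ``allowed to grow sufficiently large''). It then argues by induction that each shrinking step deletes a non-MB variable, since by Theorem~\ref{thm:min_hcond} such a deletion cannot raise $H(X_i\mid\cdot)$ above the minimum already attained. You instead turn every greedy comparison into a conditional-independence query through $I(X_i;X_j\mid\mathbf{S})$ and then use faithfulness and d-separation; this is the classical Grow--Shrink/IAMB soundness argument. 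Your route supplies two things the paper's does not really establish. First, it actually proves containment after growing; a bound of $\delta_e+\Delta_N$ on the entropy gap does not by itself force set containment. Second, it shows that every member of $\mathcal{MB}_{X_i}$ has $\delta_H>0$ and is never removed, so shrinking halts exactly at $\mathcal{MB}_{X_i}$; the paper only shows that non-MB variables can be removed. In exchange, the paper's route needs no graphical reasoning and ties soundness to the approximation bound.

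Three small repairs are needed.

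\emph{The spouse case.} The obstacle you flag is not one. The path $X_i\to C\leftarrow Y$ has $C$ as its only interior node, so it is active given any $\mathbf{S}\ni C$ with $X_i,Y\notin\mathbf{S}$. Your first bullet already places every parent and child in the stall set, so faithfulness gives $I(X_i;Y\mid\mathbf{S})>0$ directly. You can drop the chain-rule/blanket detour, which, as you noticed, does not work from single-variable zeros alone.

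\emph{The growing phase.} Algorithm~\ref{algo:naigmb} does not stop growing when no candidate improves. It always adds $X_m$, and it stops only after $\rho+1$ consecutive non-improving steps or when candidates run out. Apply your argument to the set $\mathbf{S}_0$ at the first non-improving step, where all single-variable gains vanish. This gives $\mathcal{MB}_{X_i}\subseteq\mathbf{S}_0\subseteq\tilde{\mathcal{MB}}^{+}_{X_i}$, or $\tilde{\mathcal{MB}}^{+}_{X_i}=\mathbf{V}\setminus\{X_i\}$ if no such step occurs. The shrinking phase only needs this containment.

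\emph{Precision and thresholds.} ``Arbitrary precision'' must be converted into exact zero/positive decisions. Let $m$ be the smallest positive $I(X_i;X_j\mid\mathbf{S})$ over the finitely many possible queries. Choose $\epsilon_g,\epsilon_s\in(0,m)$, and make the estimation error of each gain smaller than $\min\{\epsilon,m-\epsilon\}$ for the relevant threshold $\epsilon$. Then both the stall detection and the argmin in shrinking behave as in your exact-entropy argument.
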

The proof is provided in the Appendix \ref{subsec:proof_c}.
\begin{algorithm}[h]
\caption{\textbf{FANSITEMB} for a single target}\label{algo:naigmb}
\begin{algorithmic}[1]
	\renewcommand{\algorithmicrequire}{\textbf{Input:}}
	\renewcommand{\algorithmicensure}{\textbf{Output:}}

	\REQUIRE Target $X_{i}\in\mathbf{V}$, $K$ i.i.d. data samples in $\mathcal{D}_{K}\subseteq\mathcal{D}$, trained approximator $f_{\theta}$, subset size threshold $M$, growing stop threshold $\epsilon_{g}$, shrinking stop threshold $\epsilon_{s}$, patience coefficient $\rho$. 

	\ENSURE Predicted MB $\tilde{\mathcal{MB}}_{X_{i}}$.

\STATE Initialize $\tilde{\mathcal{MB}}_{X_{i}} \gets \emptyset$, patience $\gets 0$. 

\textbf{Growing phase}
\WHILE{exists $X_j \in \mathbf{V} \setminus (\{X_i\} \cup \tilde{\mathcal{MB}}_{X_i})$ \AND $\text{patience}\leq\rho$}
	\STATE Let $X_{m}\in \mathbf{V} \setminus (\{X_i\} \cup \tilde{\mathcal{MB}}_{X_i})$ be a variable minimizing $H(X_{i}\mid\tilde{\mathcal{MB}}_{X_{i}} \cup \{X_{m}\})$. \COMMENT{$\triangleright$ $H(\cdot\mid\cdot)$ is estimated via Eq. ~\ref{eq:general_ce_est} or Eq. ~\ref{eq:gauss_ce_trans}}

\IF {$H(X_{i} \mid \tilde{\mathcal{MB}}_{X_{i}})-H(X_{i}\mid\tilde{\mathcal{MB}}_{X_{i}} \cup \{X_{m}\}) > \epsilon_{g}$}
  \STATE patience $\gets$ 0. 
\ELSE
  \STATE patience $\gets$ patience + 1.
\ENDIF
\STATE $\tilde{\mathcal{MB}}_{X_{i}} \gets \tilde{\mathcal{MB}}_{X_{i}} \cup \{X_{m}\}$.
\ENDWHILE 

\textbf{Shrinking phase}
\WHILE{$\tilde{\mathcal{MB}}_{X_{i}} \neq \emptyset$}
	\STATE Let $X_{m}\in\tilde{\mathcal{MB}}_{X_{i}}$ be a variable minimizing $\delta_{H} = H(X_{i}\mid\tilde{\mathcal{MB}}_{X_{i}} \setminus \{X_{m}\}) - H(X_{i}\mid\tilde{\mathcal{MB}}_{X_{i}})$.
	\IF{$\delta_{H} > \epsilon_{s}$}
		\STATE \textbf{break}
	\ENDIF
	\STATE $\tilde{\mathcal{MB}}_{X_{i}} \gets \tilde{\mathcal{MB}}_{X_{i}} \setminus \{X_{m}\}$.
\ENDWHILE 
\end{algorithmic} 
\vspace{0.1cm}
\end{algorithm}

\subsection{Differential causal discovery (DCD) with minimal I-map-based initialization}

We investigate the integration of MB discovery with DCD, a recent
advancement in causal discovery (CD). Principally, MB discovery serves
as a preselection step, reducing the graph space to enhance numerical
stability and scalability in DAG estimation. SDCD \cite{nazaret2024stable}
and DAT \cite{amin2024scalable} employ sparse regression objectives
to learn MBs. DAT \cite{amin2024scalable} minimizes: $\mathcal{L}_{n}(\theta_{n})=\mathbb{E}\left[X^{n}-f_{\theta_{n}}(X^{m})_{m\neq n}\right]^{2}+\lambda\mathop{\sum|W_{i,j}^{n}|}$
while SDCD \cite{nazaret2024stable} optimizes $\mathcal{L}_{n}(\theta_{n})=-\mathbb{E}\left[p_{n}(x_{n}\mid x_{-n};\theta_{n})\right]+\lambda\mathop{\sum|W_{i,j}^{n}|}$
where $\lambda$ is the regularization term and $W_{i,j}$ denotes
adjacency matrix weights. The warm-up step of SDCD and DAT has key
limitations: (1) a single regularization term controls sparsity uniformly,
ignoring MB size variation (e.g., smaller MBs may need larger $\lambda$,
larger MBs require smaller $\lambda$); (2) SEM-derived weights poorly
reflect true dependencies, for which mutual information performs better,
leading to optimization errors that propagate in later DAG search.
Multiple configurations of $W_{i,j}^{n}$ can yield nearly identical
values of $\mathcal{L}_{n}(\theta_{n})$, reducing reliability. Building
upon SDCD \cite{nazaret2024stable}, we introduce \textbf{FANSITE-DCD},
which leverages \textbf{FANSITEMB}-inferred MBs to construct a moral
graph used as a mask. Masking on this moral graph allows the DAG search
in the later stage to continue removing redundant edges under DAG
constraints (e.g., eliminating reverse edges and spouse links). The
neural network and DAG penalty score from SDCD are utilized to support
this process. While \textbf{FANSITE-DCD} improves upon DCD, it can
also serve as a moral-graph-based initialization step for other causal
discovery methods.
\begin{defn}[Moral graph]
\label{def:moral}The moral graph $\mathcal{M}[\mathcal{G}]$ of
a Bayesian network $\mathcal{G}$ with variables in $\mathbf{V}$
and an underlying positive distribution $P$ is the undirected graph
over $\mathbf{V}$ containing edges between each variable $X_{i}$
and variables in its MB. $\mathcal{M}[\mathcal{G}]$ serves as the
\textit{unique minimal I-map} for $P$ \cite{koller2009probabilistic}. 
\end{defn}
\begin{rem}
An I-map for $P$ is any graph $\mathcal{K}$ with its set of conditional
independencies (CI) $\mathcal{I}(\mathcal{K})$ satisfying $\mathcal{I}(\mathcal{K})\subseteq\mathcal{I}$,
where $\mathcal{I}$ is the CI set associated with $P$ (i.e., $X \perp\!\!\!\perp Y \mid \mathbf{Z}$).
From Definition 3.13 in \cite{koller2009probabilistic} and Definition
\ref{def:moral}, we deduce that the moral graph $\mathcal{M}[\mathcal{G}]$
is the \textbf{\textit{sparsest I-map}}, with $\mathcal{I}(\mathcal{K})\subseteq\mathcal{I}(\mathcal{M}[\mathcal{G}])$
for all $\mathcal{K}$. Thus, if the MBs of all variables are accurately
recovered, the induced moral graph establishes a substantially reduced
space for DAG search, forming a \textit{minimal I-map-based subspace}. 
\end{rem}
\begin{prop}
Under the conditions stated in Theorem \ref{thm:soundness-cemb},
\textbf{FANSITE-DCD} utilizes observational data to search for the
optimal DAG within a minimal I-map-based subspace, rather than starting
from the larger space of all I-maps. This approach enables it to converge
more accurately and quickly to a DAG $\mathcal{\hat{G}}$ that is
I-Markov-equivalent to the true DAG $\mathcal{\mathcal{G}^{*}}$. 
\end{prop}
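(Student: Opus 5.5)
The argument has two stages. First, under the conditions of Theorem \ref{thm:soundness-cemb}, the learned MBs equal the true MBs, so the induced mask is exactly the moral graph $\mathcal{M}[\mathcal{G}^{*}]$. Second, the true DAG and its Markov equivalence class lie inside the masked search space, and that space is the smallest one that can be used safely. Convergence to an I-Markov-equivalent DAG $\hat{\mathcal{G}}$ then follows from the consistency of the underlying SDCD score/penalty optimizer restricted to this space.

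\textbf{Step 1 (exact moral graph).} By Theorem \ref{thm:soundness-cemb}, for every $X_i$, Algorithm \ref{algo:naigmb} returns exactly $\mathcal{MB}_{X_i}=Pa_{X_i}\cup Ch_{X_i}\cup Sp_{X_i}$. MB membership is symmetric under faithfulness, so the symmetry correction step changes nothing. The undirected graph with edges $\{X_i,X_j\}$ for $X_j\in\mathcal{MB}_{X_i}$ is therefore $\mathcal{M}[\mathcal{G}^{*}]$ by Definition \ref{def:moral}.

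\textbf{Step 2 (the mask loses nothing).} Every directed edge $X_j\to X_i$ of $\mathcal{G}^{*}$ has $X_j\in Pa_{X_i}\subseteq\mathcal{MB}_{X_i}$, so its skeleton is contained in $\mathcal{M}[\mathcal{G}^{*}]$. We also need this for every DAG that is Markov equivalent to $\mathcal{G}^{*}$. Such DAGs share the same skeleton and v-structures, and hence the same moral graph. So the whole equivalence class of $\mathcal{G}^{*}$ is representable under the mask.

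Conversely, consider any DAG whose skeleton uses an edge outside $\mathcal{M}[\mathcal{G}^{*}]$ while being an I-map of $P$. By the Remark following Definition \ref{def:moral} ($\mathcal{M}[\mathcal{G}]$ is the sparsest I-map), such a DAG is not minimal, so excluding it cannot exclude an optimum of a score that penalizes complexity. This justifies the claim that the search runs over a \emph{minimal I-map-based subspace} instead of the space of all I-maps. The number of admissible adjacency matrices drops from $2^{d(d-1)}$ to at most $2^{2|E(\mathcal{M})|}$, which is small for sparse graphs.

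\textbf{Step 3 (convergence).} In the large-sample limit, the SDCD likelihood score with the DAG penalty is maximized exactly by DAGs that are I-maps of $P$ with a minimal number of edges. Under faithfulness these are precisely the members of the Markov equivalence class of $\mathcal{G}^{*}$. Step 2 shows this optimum is feasible in the masked space, so the constrained optimum coincides with the unconstrained one, and the output $\hat{\mathcal{G}}$ is I-Markov-equivalent to $\mathcal{G}^{*}$. Remaining non-moral edges, i.e., spouse links and reverse orientations, are removed by the acyclicity penalty together with the sparsity term.

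The claim "more accurately and quickly" is not a theorem-level statement. I would argue it as a consequence of the reduced parameter and search space: fewer spurious stationary points, and fewer gradient coordinates to drive to zero. I would present it as a qualitative corollary rather than prove a rate.

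\textbf{Main obstacle.} Step 3 is the hard part, because differentiable DAG optimization is nonconvex, and its global optimality and consistency are assumptions borrowed from SDCD, not results proved here. I would state these explicitly as hypotheses: a consistent score whose global optima are the Markov equivalence class, and an optimizer that attains the global optimum. I would then reduce the proposition to the observation that masking by the exact moral graph preserves that optimum while shrinking the feasible set.
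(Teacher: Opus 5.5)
Your proposal is correct and follows the paper's route: the paper's entire proof is that Theorem~\ref{thm:soundness-cemb} recovers every MB exactly, so by Definition~\ref{def:moral} the mask is the true moral graph, and your Steps 2 and 3 usefully make explicit what the paper leaves unsaid, namely that every DAG Markov equivalent to $\mathcal{G}^{*}$ survives the mask and that the convergence claim rests on an assumed consistency and global optimality of the SDCD objective. One small correction to your ``Conversely'' paragraph: a DAG I-map with an edge outside $\mathcal{M}[\mathcal{G}^{*}]$ can still be a minimal I-map in the remove-one-edge sense (for $X\to Y\to Z$, building the minimal I-map along the ordering $X,Z,Y$ gives the complete DAG $X\to Z$, $X\to Y$, $Z\to Y$, which contains $X$--$Z$), but it does have strictly more edges than $\mathcal{G}^{*}$, because adjacency faithfulness forces every DAG I-map's skeleton to contain that of $\mathcal{G}^{*}$, and this faithfulness argument, not the Remark on undirected I-maps, is what justifies your conclusion there.
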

This proposition is direct facilitated by our Theorem \ref{thm:soundness-cemb},
following Definition \ref{def:moral} for moral graphs. 
\vspace{-0.15cm}

\section{Experiments\vspace{-0.1cm}}\label{sec:Results}

\subsection{Methods}

For \textbf{\textit{MB discovery}}, we evaluate: (1) constraint-based
approaches, including IAMB \cite{tsamardinos2003towards}, which dynamically
selects the most associated features to the target conditioned on
selected ones, and TC \cite{pellet2008using}, which prunes edges
by conditioning on all remaining variables; (2) score-based methods
such as S\textsuperscript{2}TMB \cite{gao2017efficient}, leveraging
the coexistence of spouses and descendants alongside decomposable
Bayesian scores, and kernel-based KMB \cite{wu2023practical}, minimizing
a conditional covariance operator; (3) Hybrid approaches like EEMB
\cite{wang2020towards}, combining CI tests for parent-child discovery
with S\textsuperscript{2}TMB scores for spouse discovery; and (4)
DAT-Moral, which serves as the MB preselection stage using sparse
regression in DAT \cite{amin2024scalable}. 

For \textbf{\textit{causal discovery}}, we assess three strategies:
(1) MB-informed constraint-based methods like MARVEL \cite{mokhtarian2025recursive},
using recursive learning via removable variables; (2) plain differentiable
causal discovery (DCD) such as DAGMA \cite{bello2022dagma}, which
employs log-determinant acyclicity constraint, and COSMO \cite{massidda2024constraint},
a constraint-free approach using an orientation matrix; and (3) MB-based
DCD identifying MBs in the preselection stage using sparse regression,
consisting of SDCD \cite{nazaret2024stable}, which uses a neural
autoencoder and minimizes negative log-likelihood in later DAG estimation,
while DAT\textbf{ }\cite{amin2024scalable} trains an amortized model
to learn separating sets and applies differentiable adjacency tests
for edge pruning. We adopt default configurations recommended by respective
baseline methods. For our approach, we utilize our class of masked
autoregressive flows \textbf{FANS} as the conditional entropy estimator
for nonlinear and real data, while using Eq. \ref{eq:gauss_ce_trans}
for linear Gaussian cases. 

\subsection{Datasets}

\subsubsection{Synthetic data}

The process for generating synthetic data strictly follows public
code from the \texttt{\textit{gcastle}} package.\footnote{https://github.com/huawei-noah/trustworthyAI}
Erdős-Rényi causal DAGs are generated with node counts $d\in\{30,100,1000,5000\}$
for both linear and nonlinear Gaussian SEMs, with weighted adjacency
matrices constructed using edge weights randomly sampled from $\mathcal{\mathcal{U}}\left([-2,-0.5]\cup[0.5,2]\right)$.
Linear data is generated using the linear SEM model $\mathbf{X}=\mathbf{W}^{\mathsf{T}}\mathbf{X}+\epsilon$,
where $\epsilon\sim\mathcal{N}(0,1)$, while nonlinear data is generated
following $X_{i}=f_{i}(X_{Pa_{i}})+\epsilon_{i}$, where $f_{i}$
is sampled from a Gaussian Process with an RBF kernel of bandwidth
$1$. For each setting, five datasets are sampled, each containing
the ground truth adjacency matrix of the causal DAG, a training dataset
of 1000 samples for $d<100$, and 5000 samples for $d\geq100$. All
evaluation scenarios include (1) linear SEMs: sparse graphs $\left\{ \text{d30},\text{d100-1},\text{d1000}\right\} $
with average degree $\bar{D}=1$, dense graphs $\text{d100-2}$ with
$\bar{D}=6$, and large graphs $\text{d5000}$ with $\bar{D}=1.5$,
and (2) nonlinear SEMs: $\left\{ \text{d30-G},\text{d100-1}\right\} $
with $\epsilon_{i}\sim\mathcal{N}(0,1)$, and $\text{d30-Mixed-noises}$
($\text{d30-MN}$) with $\epsilon_{i}\sim p_{\epsilon_{i}}$, where
$p_{\epsilon_{i}}\in\{\mathcal{N}(0,1),\text{\ensuremath{\mathcal{U}(-1,1)},\text{Laplace}}(0,1),\text{Gumbel}(0,1),\text{Exp}(0,1)\}$. 

\subsubsection{Real and semi-real networks}

We use four public real-world/semi-real gene regulatory networks (GRN):
(1) Sachs \cite{sachs2005causal}, a protein interaction network with
expression data for proteins and phospholipids in human cells; (2)
SynTReN \cite{van2006syntren}, the semi-synthetic GRN based on real
topologies and Michaelis-Menten/Hill kinetics; (3) SERGIO \cite{dibaeinia2020sergio},
the simulated gene expression dataset using high-throughput scRNA-seq
technologies guided by real GRNs (we use 3,000 single cells, three
bins each, and stochastic noise modeled via the Dual Production Decay
framework); and (4) ARTH150 \cite{opgen2007correlation}, an Arabidopsis
thaliana GRN from the \texttt{\textit{bnlearn}}{\ttfamily\footnote{https://www.bnlearn.com/}}
repository.\vspace{-0.7cm}
\begin{table}[h]
\caption{Real datasets}

\vspace{0.7cm}

\centering
\small 
    \renewcommand{\arraystretch}{0.8} % Tighten row spacing
    \begin{tabular}{@{}lccc@{}}
        \toprule
        {Dataset} & {Nodes} & {Edges} & {Observations} \\ \midrule
{\textbf{Sachs} \cite{sachs2005causal}} & {11} & {17} & {853} \\
{\textbf{SynTReN} \cite{van2006syntren}} & {20} & {24} & {500} \\
{\textbf{SERGIO} \cite{dibaeinia2020sergio}} & {100} & {137} & {9000} \\
{\textbf{ARTH150} \cite{opgen2007correlation}} & {107} & {150} & {5000} \\
		\bottomrule
    \end{tabular}
\end{table}
\vspace{-0.55cm}
\begin{table*}
\vspace{-0.5cm}

\caption{\textbf{Markov boundary discovery performance in \%}. The numbers
are \textit{mean \textpm{} standard} \textit{deviation} over 5 independent
simulations, except for real data. Missing entries ($-$) correspond
to instances with runtime exceeding the limit of 10 hours. \textbf{Bold}:
best performance, \textit{italic}: second-best performance.}\label{tab:mb_performance_metrics}

\vspace{0.7cm}

\resizebox{\linewidth}{!}{

\centering
	\small
	\renewcommand{\arraystretch}{0.8}
    \begin{tabular}{@{}lcccccccc@{}}
	
	\toprule
        {Graph type} & {Metric} & \textbf{IAMB} \cite{tsamardinos2003towards} & \textbf{S2TMB} \cite{gao2017efficient} & \textbf{EEMB} \cite{wang2020towards} & \textbf{TC} \cite{pellet2008using} & \textbf{KMB} \cite{wu2023practical} & \textbf{DAT-Moral} \cite{amin2024scalable} & \textbf{FANSITEMB (ours)} \\ \midrule

         \multicolumn{9}{c}{\textbf{Linear data}} \\ \midrule

\multirow{3}{*}{d30} & nDCG ($\uparrow$) & 47.18 $\pm$ 12.81 & 93.01 $\pm$ 3.74 & 90.78 $\pm$ 6.54 & \underline{98.91 $\pm$ 1.25} & 70.58 $\pm$ 7.60 & 90.00 $\pm$ 3.84 & \textbf{99.34 $\pm$ 1.02} \\

        & AveP ($\uparrow$) & 40.34 $\pm$ 15.41 & 90.56 $\pm$ 5.11 & 87.42 $\pm$ 8.93 & \underline{98.42 $\pm$ 1.81} & 60.69 $\pm$ 9.57 & 86.14 $\pm$ 5.17 & \textbf{99.08 $\pm$ 1.39} \\

        & F1 ($\uparrow$) & 48.25 $\pm$ 12.72 & 93.58 $\pm$ 3.79 & 91.39 $\pm$ 6.40 & \underline{98.68 $\pm$ 1.72} & 65.36 $\pm$ 7.04 & 21.28 $\pm$ 3.22 & \textbf{99.44 $\pm$ 0.87} \\ \midrule

      \multirow{3}{*}{d100-1} & nDCG ($\uparrow$) & 41.71 $\pm$ 5.36 & 94.27 $\pm$ 1.90 & 96.01 $\pm$ 2.00 & \textbf{99.90 $\pm$ 0.22} & {$-$} &  97.42 $\pm$ 1.34 & \underline{99.86 $\pm$ 0.22} \\

        & AveP ($\uparrow$) & 34.54 $\pm$ 5.37 & 91.80 $\pm$ 2.74 & 94.29 $\pm$ 2.88 & \textbf{99.85 $\pm$ 0.34} & {$-$} & 96.33 $\pm$ 1.72 & \underline{99.78 $\pm$ 0.33} \\

        & F1 ($\uparrow$) & 42.87 $\pm$ 5.64 & 95.60 $\pm$ 1.73 & 96.64 $\pm$ 1.91 & \underline{98.59 $\pm$ 0.60} & {$-$} & 22.33 $\pm$ 1.14 & \textbf{99.88 $\pm$ 0.18} \\ \midrule

      \multirow{3}{*}{d100-2} & nDCG ($\uparrow$) & 38.69 $\pm$ 3.11 & 42.69 $\pm$ 2.74 & 20.13 $\pm$ 1.05 & \underline{83.43 $\pm$ 9.49} & {$-$} &  45.58 $\pm$ 6.08 & \textbf{97.12 $\pm$ 0.34} \\

        & AveP ($\uparrow$) & 19.81 $\pm$ 2.60 & 26.33 $\pm$ 1.83 & 8.07 $\pm$ 0.51 & \underline{75.17 $\pm$ 13.92} & {$-$} & 25.82 $\pm$ 6.01 & \textbf{95.65 $\pm$ 0.54} \\

        & F1 ($\uparrow$) & 40.99 $\pm$ 2.12 & 46.39 $\pm$ 2.43 & 19.27 $\pm$ 0.88 & \underline{78.09 $\pm$ 10.98} & {$-$} & 57.01 $\pm$ 2.78 & \textbf{97.38 $\pm$ 0.32} \\ \midrule

      \multirow{3}{*}{d1000} & nDCG ($\uparrow$) & 43.53 $\pm$ 1.79 & 96.20 $\pm$ 0.59 & 96.45 $\pm$ 0.33 & \textbf{99.98 $\pm$ 0.02} & {$-$} & 99.10 $\pm$ 0.28 & \underline{99.96 $\pm$ 0.05} \\

        & AveP ($\uparrow$) & 36.58 $\pm$ 1.95 & 94.90 $\pm$ 0.78 &  94.93 $\pm$ 0.47 & \textbf{99.97 $\pm$ 0.03} & {$-$} & 98.64 $\pm$ 0.42 & \underline{99.93 $\pm$ 0.07} \\

        & F1 ($\uparrow$) & 45.50 $\pm$ 1.71 & \underline{96.82 $\pm$ 0.49} & 96.76 $\pm$ 0.30 & 87.06 $\pm$ 1.40 & {$-$} & 45.05 $\pm$ 1.62 & \textbf{99.96 $\pm$ 0.04} \\ \midrule

      \multirow{3}{*}{d5000} & nDCG ($\uparrow$) & {$-$} & \underline{88.11 $\pm$ 1.08} & 87.8 $\pm$ 1.13 & 61.58 $\pm$ 0.72 & {$-$} & {$-$} & \textbf{99.59 $\pm$ 0.09} \\

        & AveP ($\uparrow$) & {$-$} & \underline{83.87 $\pm$ 1.43} & 82.89 $\pm$ 1.52 & 50.73 $\pm$ 0.77 & {$-$} & {$-$} & \textbf{99.37 $\pm$ 0.14} \\

        & F1 ($\uparrow$) & {$-$} & \underline{89.75 $\pm$ 0.95} & 88.65 $\pm$ 1.06 & 52.89 $\pm$ 0.76 & {$-$} & {$-$} & \textbf{99.61 $\pm$ 0.09} \\ \midrule

\multicolumn{9}{c}{\textbf{Nonlinear data}} \\ \midrule

\multirow{3}{*}{d30-G} & nDCG ($\uparrow$) & 29.25 $\pm$ 10.20 & 56.76 $\pm$ 9.78 & 56.57 $\pm$ 10.14 & 55.67 $\pm$ 10.22 & 68.50 $\pm$ 7.43 & \underline{80.88 $\pm$ 4.59} & \textbf{93.5 $\pm$ 1.63} \\

        & AveP ($\uparrow$) & 22.81 $\pm$ 9.33 & 48.28 $\pm$ 10.30 & 48.05 $\pm$ 10.83 & 47.23 $\pm$ 11.29 & 59.00 $\pm$ 8.26 & \underline{74.11 $\pm$ 5.77} & \textbf{90.58 $\pm$ 2.23} \\

        & F1 ($\uparrow$) & 30.49 $\pm$ 9.82 & 58.8 $\pm$ 9.20 & 59.06 $\pm$ 9.65 & 55.52 $\pm$ 10.05 & \underline{61.84 $\pm$ 8.26} & 21.28 $\pm$ 3.22 & \textbf{91.2 $\pm$ 2.08} \\ \midrule

\multirow{3}{*}{d30-MN} & nDCG ($\uparrow$) & 27.33 $\pm$ 7.26 & 52.24 $\pm$ 2.75 & 54.44 $\pm$ 3.07 & 53.13 $\pm$ 4.32 & 61.66 $\pm$ 5.50 & \underline{76.60 $\pm$ 7.45} & \textbf{90.67 $\pm$ 4.85} \\

        & AveP ($\uparrow$) & 20.98 $\pm$ 4.46 & 43.63 $\pm$ 3.80 & 45.94 $\pm$ 4.53 & 45.20 $\pm$ 5.39 & 51.69 $\pm$ 5.80 & \underline{69.01 $\pm$ 8.23} & \textbf{86.56 $\pm$ 6.44} \\

        & F1 ($\uparrow$) & 28.19 $\pm$ 9.12 & 54.99 $\pm$ 2.58 & \underline{56.09 $\pm$ 2.95} & 52.66 $\pm$ 3.93 & 55.32 $\pm$ 5.72 & 21.28 $\pm$ 3.22 & \textbf{87.72 $\pm$ 6.12} \\ \midrule

\multirow{3}{*}{d100-1} & nDCG ($\uparrow$) & 33.07 $\pm$ 4.80 & 65.21 $\pm$ 3.64 & 65.28 $\pm$ 3.42 & 70.83 $\pm$ 3.68 & {$-$} & \underline{87.91 $\pm$ 5.20} & \textbf{90.91 $\pm$ 2.41} \\

        & AveP ($\uparrow$) & 26.41 $\pm$ 3.83 & 56.61 $\pm$ 3.42 & 56.71 $\pm$ 3.22 & 62.83 $\pm$ 4.01 & {$-$} & \underline{83.20 $\pm$ 6.97} & \textbf{86.72 $\pm$ 3.51} \\

        & F1 ($\uparrow$) & 34.02 $\pm$ 5.64 & 67.90 $\pm$ 3.71 & \underline{68.03 $\pm$ 3.28} & 67.10 $\pm$ 3.27 & {$-$} & 59.00 $\pm$ 1.04 & \textbf{87.54 $\pm$ 3.00} \\ \midrule

\multicolumn{9}{c}{\textbf{Real/Semi-real networks}} \\ \midrule

\multirow{3}{*}{Sachs} & nDCG ($\uparrow$) & 25.04 & 60.84 & 60.84 & 64.36 & \underline{73.22} & 70.2 & \textbf{86.44} \\

        & AveP ($\uparrow$) & 17.2 & 49.42 & 49.42 & 53.96 & \underline{64.2} & 60.08 & \textbf{80.44} \\

        & F1 ($\uparrow$) & 21.42 & 61.05 & 61.05 & 64.08 & \underline{64.68} & 58.17 & \textbf{84.48} \\ \midrule

\multirow{3}{*}{SynTReN} & nDCG ($\uparrow$) & 16.13 & 31.72 & 29.3 & 22.79 & 22.27 & \underline{33.96} & \textbf{68.36} \\

        & AveP ($\uparrow$) & 11.08 & 28.1 & 26.22 & 20.16 & 21.3 & \underline{31.59} & \textbf{64.39} \\

        & F1 ($\uparrow$) & 9.52 & 31.35 & 31.48 & 31.31 & 21.67 & \underline{33.03} & \textbf{68.18} \\ \midrule

\multirow{3}{*}{SERGIO} & nDCG ($\uparrow$) & \underline{45.03} & {$-$} & {$-$} & 38.11 & {$-$} & 14.70 & \textbf{85.93} \\

        & AveP ($\uparrow$) & \underline{42.42} & {$-$} & {$-$} & 35.01 & {$-$} & 12.63 & \textbf{82.56} \\

        & F1 ($\uparrow$) & \underline{33.75} & {$-$} & {$-$} & 14.03 & {$-$} & 5.51 & \textbf{84.16} \\ \midrule

\multirow{3}{*}{ARTH150} & nDCG ($\uparrow$) & 35.56 & 93.58 & \underline{94.12} & 92.55 & {$-$} & 79.73 & \textbf{96.66} \\

        & AveP ($\uparrow$) & 31.5 & 91.27 & \underline{92.19} & 91.36 & {$-$} & 76.72 & \textbf{95.97} \\

        & F1 ($\uparrow$) & 36.32 & \underline{94.54} & 94.52 & 90.49 & {$-$} & 38.48 & \textbf{96.12} \\ 
	\bottomrule
\end{tabular}

}

\vspace{-0.5cm}
\end{table*}
\begin{table*}
\caption{\textbf{Causal discovery performance}. The numbers are \textit{mean
\textpm{} standard} \textit{deviation} over 5 independent simulations,
except for real data. \textbf{Bold}: best performance, \textit{italic}:
second-best performance.}\label{tab:cd_performance_metrics}
\vspace{0.7cm}

\resizebox{\linewidth}{!}{

 \centering
	\small
	\renewcommand{\arraystretch}{0.8}
    \begin{tabular}{@{}lccccccc@{}}
    \toprule
        {Graph type} & {Metric} & \textbf{DAGMA} \cite{bello2022dagma} & \textbf{COSMO} \cite{massidda2024constraint} & \textbf{SDCD} \cite{nazaret2024stable} & \textbf{DAT} \cite{amin2024scalable} & \textbf{MARVEL} \cite{mokhtarian2025recursive} & \textbf{FANSITE-DCD (ours)} \\ \midrule

         \multicolumn{8}{c}{\textbf{Nonlinear data}} \\ \midrule

\multirow{3}{*}{d30-G} & {SHD ($\downarrow$)} & 15.4 $\pm$ 8.85 & 25 $\pm$ 5.66 & \underline{13 $\pm$ 5.43} & 420.2 $\pm$ 7.50 &	24.6 $\pm$ 6.23 &	\textbf{12.8 $\pm$ 6.53} \\

        & {AUC-ROC ($\uparrow$)} & 0.752 $\pm$ 0.095 &	0.822 $\pm$ 0.084 &	\underline{0.858 $\pm$ 0.074} &	0.502 $\pm$ 0.068 &	0.603 $\pm$ 0.039 &	\textbf{0.902 $\pm$ 0.045} \\

        & {AUC-PR ($\uparrow$)} & 0.488 $\pm$ 0.172 &	0.375 $\pm$ 0.077 &	\underline{0.570 $\pm$ 0.144} &	0.036 $\pm$ 0.013 &	0.144 $\pm$ 0.061 &	\textbf{0.592 $\pm$ 0.098} \\ \midrule

     \multirow{3}{*}{d30-MN} & {SHD ($\downarrow$)} & \underline{21.8 $\pm$ 7.82} &	48.4 $\pm$ 12.46 &	30.6 $\pm$ 9.81 &	419.8 $\pm$ 6.10 &	26.6 $\pm$ 8.91 &	\textbf{18.2 $\pm$ 8.14} \\

& {AUC-ROC ($\uparrow$)} & 0.654 $\pm$ 0.066 &	 \underline{0.777 $\pm$ 0.058} &	 0.766 $\pm$ 0.037 &	0.488 $\pm$ 0.061 &	0.578 $\pm$ 0.041 &	\textbf{0.866 $\pm$ 0.044} \\

        & {AUC-PR ($\uparrow$)} & 0.252 $\pm$ 0.118 &	0.207 $\pm$ 0.071 &	\underline{0.259 $\pm$ 0.065} &	0.034 $\pm$ 0.011 &	0.113 $\pm$ 0.062 &	\textbf{0.466 $\pm$ 0.121} \\ \midrule

\multirow{3}{*}{d100-1} & {SHD ($\downarrow$)} & 58.8 $\pm$ 7.46 &	39.4 $\pm$ 4.98 &	\underline{38 $\pm$ 6.24} &	80.6 $\pm$ 9.26 &	86.8 $\pm$ 10.31 &	\textbf{25.6 $\pm$ 2.3} \\

 & {AUC-ROC ($\uparrow$)} & 0.712 $\pm$ 0.023 &	0.815 $\pm$ 0.035 &	\underline{0.822 $\pm$ 0.044} &	 0.652 $\pm$ 0.027 &	0.604 $\pm$ 0.029 &	\textbf{0.920 $\pm$ 0.021} \\

        & {AUC-PR ($\uparrow$)} & 0.386 $\pm$ 0.042 &	\underline{0.592 $\pm$ 0.069} &	0.590 $\pm$ 0.102 &	0.118 $\pm$ 0.033 &	0.113 $\pm$ 0.039 &	\textbf{0.694 $\pm$ 0.054} \\ \midrule

\multicolumn{8}{c}{\textbf{Real/Semi-real networks}} \\ \midrule

\multirow{3}{*}{Sachs} & {SHD ($\downarrow$)} & 14 &	15 &	\underline{13} &	37 &	16 &	\textbf{10} \\

 & {AUC-ROC ($\uparrow$)} & 0.57 &	0.58 &	0.63 &	\underline{0.65} &	0.52 &	\textbf{0.73} \\

        & {AUC-PR ($\uparrow$)} & 0.2 &	0.22 &	\underline{0.28} &	0.2 &	0.16 &	\textbf{0.45} \\ \midrule

\multirow{3}{*}{SynTReN} & {SHD ($\downarrow$)} & \underline{28} &	36 &	37 &	32 &	29 &	\textbf{19} \\

 & {AUC-ROC ($\uparrow$)} & 0.59 &	\underline{0.65} &	0.6 &	0.53 &	0.51 &	\textbf{0.82} \\

        & {AUC-PR ($\uparrow$)} & 0.12 &	\underline{0.14} &	0.1 &	0.07 &	0.06 &	\textbf{0.42} \\ 
   
\bottomrule
\end{tabular}

}
\end{table*}

\subsection{Evaluation metrics}

\textbf{\textsl{MB evaluation: }}In addition to F1 score, two ranking
metrics from information retrieval are employed: (1) nDCG (Normalized
Discounted Cumulative Gain), and (2) AveP (Average Precision). We
observe that most surveyed MB discovery methods evolve their predicted
MBs by approaching variables from most to least probable. To facilitate
this, we do not alter the order of variables in the MB when running
the compared algorithms. In principle, a higher nDCG/AveP is reported
when variables in the true MB are positioned closer to the top of
the predicted list. We report the average results of F1, nDCG, and
AveP across all MBs of variables in the network\textbf{\textsl{. }}

\textbf{\textsl{Causal discovery thresholding \& evaluation:}}\textbf{
}For DCD methods, we follow standard CD practices by setting a final
threshold on post-training score adjacency matrices to ensure DAG
outputs \cite{massidda2024constraint,bello2022dagma,nazaret2024stable}.
We utilize three metrics that compare the estimated DAG with the
ground truth: (1) AUCPR (area under the curve of precision-recall),
(2) AUC-ROC (area under the receiver operating characteristic curve),
and (3) SHD (Structural Hamming Distance), which refers to the minimum
number of edge additions, deletions, and reversals required to transform
the recovered DAG into the true DAG.

\subsection{Main results}

Effectiveness of proposed methods \textbf{FANSITEMB} and \textbf{FANSITE-DCD}
across various settings is shown in Tables \ref{tab:mb_performance_metrics}
and \ref{tab:cd_performance_metrics}. Performance is reported for
scenarios within a 10-hour time limit (otherwise shown with a dash
``$-$''). In both tasks, \textbf{FANSITEMB} and \textbf{FANSITE-DCD}
consistently achieve best or second-best results. In the MB discovery
task, \textbf{FANSITEMB }significantly outperforms other baselines
by a large margin in challenging scenarios, including dense graphs
($\text{d100-1}$), super-large graphs ($\text{d5000}$), nonlinear
settings, and real networks, due to two advantages: (1) a universal
score estimator capturing complex relationships and varied noise distributions,
and (2) a straightforward greedy strategy for evolving MBs in poly-time.
S\textsuperscript{2}TMB, EEMB\textbf{ }and\textbf{ }TC\textbf{ }show
competitive performance on linear data thanks to solid convergence
foundations of constraint-based groups but struggle with nonlinear
data because of restricted score criteria. Moral-DAT and KMB perform
better in nonlinear settings and real networks but receive low F1
scores due to sensitive thresholds/stop conditions. In the CD task,
focus primarily lies on nonlinear settings due to their greater challenge.
SDCD\textbf{ }serves as SOTA among previous methods, surpassed by
\textbf{FANSITE-DCD}, which incorporates \textbf{FANSITEMB} to better
warm-start the subsequent DAG recovery stage from a substantially
reduced search space.
\begin{figure}[h]
\vspace{-0.4cm}

\includegraphics[width=1\columnwidth]{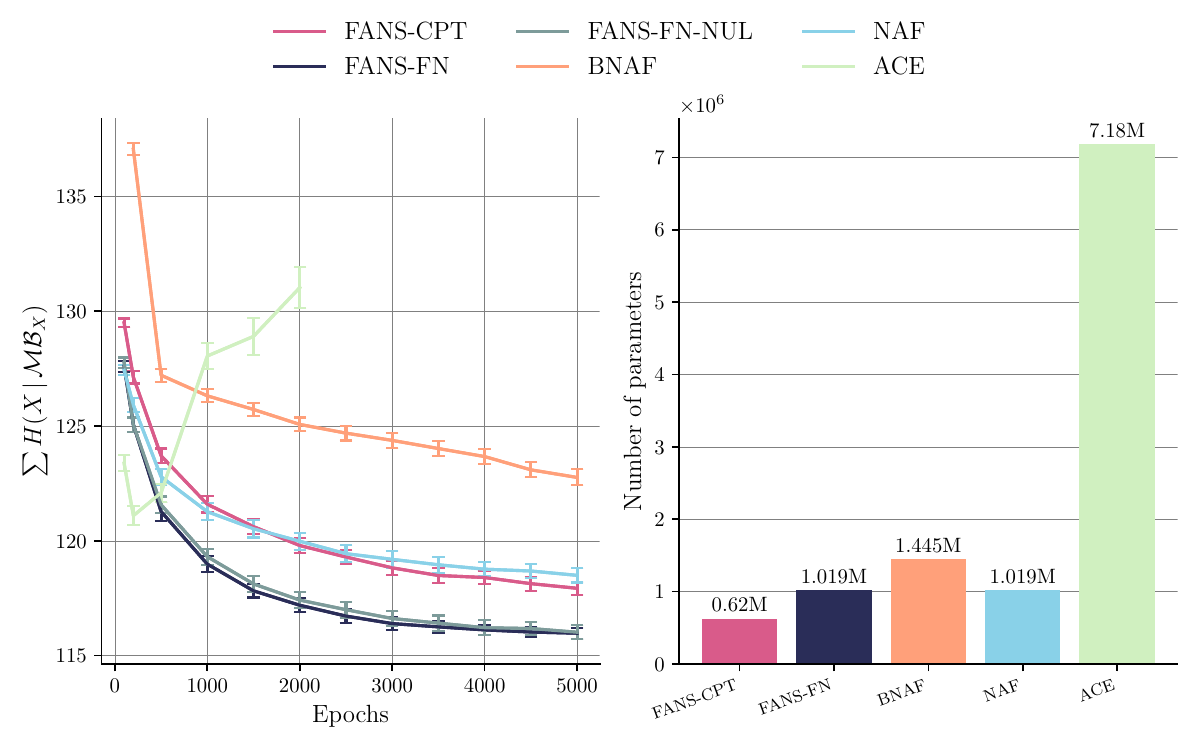}\vspace{0.6cm}\caption{Different model options for estimating our score criterion---conditional
entropy. Experiments conducted on synthetic nonlinear graphs with
$d=100$. FANS-CPT, FANS-FN, and FANS-FN-NUL represent three variants
in our proposed class of masked autoregressive flows (MAFs) tailored
for MB discovery, corresponding to: a compact version with leaf masks,
a non-compact version with leaf-masks, and a non-compact version
without leaf masks. NAF and BNAF are typical MAF classes, while ACE
\cite{strauss2021arbitrary} represents a class of energy-based estimators.
The left figure illustrates the reduction across training epochs of
$\sum H(X\mid\mathcal{MB}_{X})$, which is the total conditional entropy
as each variable conditions on its MB)---smaller values indicate
better performance. The right figure shows the number of trainable
parameters; fewer parameters imply greater efficiency. When \textbf{FANS}
is fully parameterized and trained with leaf-masking (FANS-FN), it
learns even better than the compact counterpart. However, the compact
\textbf{FANS}, with half the parameters of NAF, is expressive enough
to surpass NAF in training and estimation, demonstrating the efficacy
of our \textbf{FANS} class, as discussed in Subsection \ref{subsect:asmaf}. }\label{fig:ce_options}
\end{figure}

\vspace{-0.3cm}

\section{Conclusion}\label{sec:Conclusion}

This paper presents an efficient score-based framework for Markov
boundary (MB) discovery, employing conditional entropy from information-theoretic
learning to facilitate score objective design. A novel class of autoregressive
flows is introduced to capture complex dependencies and noise distributions,
accompanied by a theoretically guaranteed greedy parallelizable search
in poly-time, ensuring robustness, scalability, and reliability in
challenging high-dimensional and nonlinear settings. Experiments on
synthetic and real bio-genetic datasets demonstrate the method’s superior
performance in both MB and causal discovery tasks. As MB discovery
is fundamental for tasks such as feature selection and causal inference,
this advancement offers a versatile and promising solution for these
critical data mining problems.
\balance

\bibliographystyle{plain}
\bibliography{ref}

\newpage{}

\onecolumn

\begin{appendices}

\section{Proofs from the section \ref{sec:Methodology} }

\subsection{Proof of theorem \ref{thm:min_hcond}}\label{subsec:proof_a}

\textbf{Theorem }\ref{thm:min_hcond}\textbf{.} Let $T\in\mathbf{V}$
be a random variable and let $\mathcal{MB}_{T}\subset\mathbf{V}$
denote its Markov boundary. Under causal sufficiency and the faithfulness
condition, the Markov boundary $\mathcal{MB}_{T}$ of the target variable
$T$ is the unique minimal subset $\mathbf{S}\subseteq\mathbf{V}\setminus\{T\}$
that minimizes the conditional entropy $H(T\mid\mathbf{S})$. Furthermore,
for any subset $\mathbf{S}'\subseteq\mathbf{V}\setminus\{T\}$ such
that $\mathcal{MB}_{T}\subseteq\mathbf{S}'$, it holds that $H(T\mid\mathbf{S}')=H(T\mid\mathcal{MB}_{T})$.

\textit{Proof.}

By Theorem 2 in \cite{yu2021unified}, under causal sufficiency and
the faithfulness condition, the Markov boundary $\mathcal{MB}_{T}$
of $T$ is the unique minimal subset of $\mathbf{V}\setminus\{T\}$
that maximizes the mutual information $I(T;\mathbf{S})$ over all
subsets $\mathbf{S}\subseteq\mathbf{V}\setminus\{T\}$.

By the definition of mutual information,
\begin{equation}
I(T;\mathbf{S})=H(T)-H(T\mid\mathbf{S}),\label{eq:plem1}
\end{equation}
where $H(T)$ is the entropy of $T$ and $H(T\mid\mathbf{S})$ is
the conditional entropy of $T$ given $\mathbf{S}$. Since $H(T)$
is independent of the choice of $\mathbf{S}$, maximizing $I(T;\mathbf{S})$
is equivalent to minimizing $H(T\mid\mathbf{S})$. Therefore,
\begin{equation}
\mathcal{MB}_{T}=\arg\min_{\mathbf{S}\subseteq\mathbf{V}\setminus\{T\}}H(T\mid\mathbf{S}),\label{eq:plem2}
\end{equation}
and $\mathcal{MB}_{T}$ is the unique minimal subset with this property.

Now, consider any subset $\ensuremath{\mathbf{S}'\subseteq\mathbf{V}\setminus\{T\}}$
such that $\mathcal{MB}_{T}\subseteq\mathbf{S}'$. By the property
that conditioning on a larger set cannot increase conditional entropy,
we have
\begin{equation}
H(T\mid\mathbf{S}')\leq H(T\mid\mathcal{MB}_{T}).\label{eq:plem3}
\end{equation}
Since $\mathcal{MB}_{T}$ is a minimizer, equality must hold:
\begin{equation}
H(T\mid\mathbf{S}')=H(T\mid\mathcal{MB}_{T}).\label{eq:plem4}
\end{equation}
$\blacksquare$

\subsection{Proof of theorem \ref{thm:Bound}}\label{subsec:proof_b}

\textbf{Theorem }\ref{thm:Bound}\textbf{.} Assume the existence of
a universal conditional entropy approximator $\mathcal{F}_{\theta}^{*}$.
Let the forward phase of Algorithm \ref{algo:naigmb} yield $\tilde{\mathcal{MB}}_{X_{i}}^{+}$,
with $|\tilde{\mathcal{MB}}_{X_{i}}^{+}|=k'$. Denote the true Markov
boundary of $X_{i}$ as $\mathcal{MB}_{X_{i}}$ with $|\mathcal{MB}_{X_{i}}|=k$.
For any $\delta_{e}>0$,
\begin{equation}
0\leq H(X_{i}\mid\tilde{\mathcal{MB}}_{X_{i}}^{+})-H(X_{i}\mid\mathcal{MB}_{X_{i}})\leq\delta_{e}+\Delta_{N}\label{eq:ptheo1}
\end{equation}
where
\[
\Delta_{N}:=\log N\left[1-\exp\left(-\frac{k'}{k\gamma\max\{\log N,\log(1/\delta_{e})\}}\right)\right],
\]
and $\gamma$ reflects measurement noise for $N$ samples. 

For the linear Gaussian case, where the conditional entropy $H(T\mid\mathbf{S})$
has the closed form 
\begin{equation}
H(T\mid\mathbf{S})=\frac{1}{2}\left(1+\log(2\pi)+\log\frac{\det\bm{\Sigma}_{\{T\}\cup\mathbf{S}}}{\det\bm{\Sigma}_{\mathbf{S}}}\right),\label{eq:ptheo2}
\end{equation}
define:
\begin{align*}
A & :=2\pi e\sigma_{X_{i}}^{2}-\exp(2H(X_{i}\mid\mathcal{MB}_{X_{i}}))\\
A' & :=2\pi e\sigma_{X_{i}}^{2}-\exp(2H(X_{i}\mid\tilde{\mathcal{MB}}_{X_{i}}^{+}))
\end{align*}
Then:
\begin{equation}
1\leq\frac{A}{A'}\leq\frac{1}{1-\exp\left(-\lambda_{\min}(\mathbf{C},2k')\right)}\label{eq:ptheo3}
\end{equation}
where $\sigma_{X_{i}}$ is the standard deviation of $X_{i}$, and
$\lambda_{\min}(\mathbf{C},2k')$ is the minimum eigenvalue among
all $2k'\times2k'$ sub-matrices of the joint correlation matrix $\mathbf{C}$.

\textit{Proof.}

We first consider the general case. Assume the density $p(x)$ of
any $X$ is Riemann integrable. By the limiting density of discrete
points (LDDP) theorem (Theorem 8.3.1, \cite{cover1999elements}),
for sufficiently small bin width $\Delta$,
\begin{equation}
H_{\mathbb{S}}(X^{\Delta})+\log\Delta\rightarrow H(X)\quad\text{as}\quad\Delta\rightarrow0,\label{eq:ptheo4}
\end{equation}
where $H_{\mathbb{S}}(X^{\Delta})$ is the Shannon entropy of the
discretized variable. Therefore, the conditional entropy for continuous
variable $T$ and conditioning set $\mathbf{S}$ can be approximated
as
\begin{equation}
H(T\mid\mathbf{S})=H(T,\mathbf{S})-H(\mathbf{S})\approx H_{\mathbb{S}}(T,\mathbf{S})-H_{\mathbb{S}}(\mathbf{S})=H_{\mathbb{S}}(T\mid\mathbf{S}).\label{eq:ptheo5}
\end{equation}
This facilitates Theorem 2 in \cite{chen2015sequential}. Specifically,
for any $\delta_{e}>0$, the Shannon mutual information satisfies:
\begin{equation}
I_{\mathbb{S}}(X_{i};\tilde{\mathcal{MB}}_{X_{i}}^{+})\geq I_{\mathbb{S}}(X_{i};\mathcal{MB}_{X_{i}})-\delta_{e}-\Delta_{N},\label{eq:ptheo6}
\end{equation}
where
\[
\Delta_{N}:=\log N\left[1-\exp\left(-\frac{k'}{k\gamma\max\{\log N,\log(1/\delta_{e})\}}\right)\right].
\]
From Eq. \ref{eq:ptheo5} and Inequality \ref{eq:ptheo6}, we can
derive the upper bound in the general case:
\begin{equation}
0\leq H(X_{i}\mid\tilde{\mathcal{MB}}_{X_{i}}^{+})-H(X_{i}\mid\mathcal{MB}_{X_{i}})\leq\delta_{e}+\Delta_{N}.\label{eq:ptheo7}
\end{equation}

If we choose $k'\geq k\gamma\max\{\log N,\log(1/\delta_{e})\}\ln\left(\frac{\log N}{\delta_{e}}\right)$
(i.e., sufficiently many queries in the forward pass of Algorithm
\ref{algo:naigmb}), it follows directly from the analysis of Theorem
2 in \cite{chen2015sequential} that
\[
I_{\mathbb{S}}(X_{i};\tilde{\mathcal{MB}}_{X_{i}}^{+})\geq I_{\mathbb{S}}(X_{i};\mathcal{MB}_{X_{i}})-2\delta_{e},
\]
which implies that: $\ensuremath{H(X_{i}\mid\tilde{\mathcal{MB}}_{X_{i}}^{+})-H(X_{i}\mid\mathcal{MB}_{X_{i}})\leq2\delta}_{e}$. 

\textbf{Linear Gaussian Case.}

Suppose $\mathbf{x}_{\mathbf{S}}$ are standardized to $\tilde{\mathbf{x}}_{\mathbf{S}}$.
First, we prove that the entropy of a standardized Gaussian variable
set $\mathbf{S}$ is given by
\begin{equation}
\tilde{H}(\mathbf{S})=H(\mathbf{S})-\sum_{j}\log\sigma_{j},\label{eq:ptheo8}
\end{equation}
where $\sigma_{j}$ is the standard deviation of $X_{j}\in\mathbf{S}$. 

Denote $s:=|\mathbf{S}|$. Let $\mathbf{x}_{\mathbf{S}}\sim\mathcal{N}(\boldsymbol{\mu}_{\mathbf{S}},\boldsymbol{\Sigma}_{\mathbf{S}})$,
where $\boldsymbol{\mu}_{\mathbf{S}}\in\mathbb{R}^{s}$ is the mean
vector and $\boldsymbol{\Sigma}_{\mathbf{S}}\in\mathbb{R}^{s\times s}$
is the covariance matrix. The Gaussian entropy $H(\mathbf{S})$ is
given by:
\begin{equation}
H(\mathbf{S})=\frac{1}{2}\log\left((2\pi e)^{s}\det(\boldsymbol{\Sigma}_{\mathbf{S}})\right).\label{eq:ptheo9}
\end{equation}

Let $\mathbf{S}:=\{X_{i_{1}},X_{i_{2}},\ldots,X_{i_{s}}\}$ and let
$\sigma_{j}$ denote the standard deviation of $X_{j}$ for all $j\in\{i_{1},i_{2},\ldots,i_{s}\}$.
Using the factorization $\boldsymbol{\Sigma}_{\mathbf{S}}=\mathbf{D}_{\mathbf{S}}\mathbf{C_{S}}\mathbf{D_{S}}$,
where $\mathbf{D_{S}}=\text{diag}(\sigma_{i_{1}},\ldots,\sigma_{i_{s}})$,
and $\mathbf{C}_{\mathbf{S}}$ is sub-correlation matrix of $\mathbf{C}$
corresponding to variables in $\mathbf{S}$ from $\{X_{1},X_{2},\ldots,X_{d}\}$,
we get:
\begin{equation}
\det(\boldsymbol{\Sigma}_{\mathbf{S}})=\det(\mathbf{D}_{\mathbf{S}})^{2}\det(\mathbf{C}_{\mathbf{S}})=\left(\prod_{j=1}^{s}\sigma_{i_{j}}\right)^{2}\det(\mathbf{C}_{\mathbf{S}}).\label{eq:ptheo10}
\end{equation}

So the entropy becomes:
\begin{equation}
H(\mathbf{S})=\frac{1}{2}\log\left((2\pi e)^{s}\left(\prod_{j=1}^{s}\sigma_{i_{j}}\right)^{2}\det(\mathbf{C}_{\mathbf{S}})\right).\label{eq:ptheo11}
\end{equation}
Now define the standardization:
\begin{equation}
\tilde{\mathbf{x}}_{\mathbf{S}}=\mathbf{D_{S}}^{-1}(\mathbf{x}_{\mathbf{S}}-\boldsymbol{\mu}_{\mathbf{S}})\sim\mathcal{N}(\mathbf{0},\mathbf{C}_{\mathbf{S}}).\label{eq:ptheo12}
\end{equation}

The entropy $\tilde{H}(\mathbf{S})$ corresponding to $\tilde{\mathbf{x}}_{\mathbf{S}}$
is:
\begin{equation}
\tilde{H}(\mathbf{S})=\frac{1}{2}\log\left((2\pi e)^{s}\det(\mathbf{C}_{\mathbf{S}})\right).\label{eq:ptheo13}
\end{equation}
Now subtract:
\begin{align}
H(\mathbf{S})-\tilde{H}(\mathbf{S}) & =\frac{1}{2}\log\left((2\pi e)^{s}\left(\prod_{j=1}^{s}\sigma_{i_{j}}\right)^{2}\det(\mathbf{C}_{\mathbf{S}})\right)-\frac{1}{2}\log\left((2\pi e)^{s}\det(\mathbf{C}_{\mathbf{S}})\right)\label{eq:ptheo14}\\
 & =\frac{1}{2}\log\left(\left(\prod_{j=1}^{s}\sigma_{i_{j}}\right)^{2}\right)=\log\left(\prod_{j=1}^{s}\sigma_{i_{j}}\right)=\sum_{j=1}^{s}\log\sigma_{i_{j}}.\nonumber 
\end{align}

Next, let $\mathbf{C}{}_{\mathbf{S}}$ and $\mathbf{C}_{\{T\}\cup\mathbf{S}}$
denote the sub-matrices of the correlation matrix $\mathbf{C}$ corresponding
to the variable sets $\mathbf{S}$ and $\{T\}\cup\mathbf{S}$, respectively
(i.e., the covariance matrix after standardization). From Lemma 3.6
in \cite{das2011submodular}, we have:
\begin{equation}
\mathbf{C}_{\{T\}\cup\mathbf{S}}=\begin{pmatrix}1 & \mathbf{b}^{\top}\\
\mathbf{b} & \mathbf{C_{S}}
\end{pmatrix},\label{eq:ptheo15}
\end{equation}
where $\mathbf{b}$ is the covariance vector between $T$ and variables
in $\mathbf{S}$.

The conditional entropy is then
\begin{align}
H(T\mid\mathbf{S}) & =H(T,\mathbf{S})-H(\mathbf{S})\nonumber \\
 & =\tilde{H}(T,\mathbf{S})+\log\sigma_{T}+\sum_{j=1}^{s}\log\sigma_{i_{j}}-\tilde{H}(\mathbf{S})-\sum_{j=1}^{s}\log\sigma_{i_{j}}\nonumber \\
 & =\tilde{H}(T,\mathbf{S})+\log\sigma_{T}-\tilde{H}(\mathbf{S})\nonumber \\
 & =\tilde{H}(T\mid\mathbf{S})+\log\sigma_{T}\nonumber \\
 & =\frac{1}{2}\left(1+\log(2\pi)+\log\frac{\det\mathbf{C}_{\{T\}\cup\mathbf{S}}}{\det\bm{\mathbf{C}}_{\mathbf{S}}}\right)+\log\sigma_{T}\label{eq:ptheo16}\\
 & =\frac{1}{2}\left(1+\log(2\pi)+\log(1-\mathbf{b^{\mathsf{T}}}\mathbf{C}_{\mathbf{S}}^{-1}\mathbf{b})\right)+\log\sigma_{T}\nonumber 
\end{align}
It is easy to see that the forward pass procedure in our Algorithm
\ref{algo:naigmb} aligns with the forward regression (forward selection)
process defined in Definition 3.1 of \cite{das2011submodular}. From
Eq. \ref{eq:ptheo16}, it can be deduced that minimizing our conditional
entropy score $H(X_{i}\mid\tilde{\mathcal{MB}}_{X_{i}}^{+})$ is equivalent
to maximizing the term
\[
R_{X_{i},\tilde{\mathcal{MB}}_{X_{i}}^{+}}^{2}:=\mathbf{b}^{\mathsf{T}}\left(\mathbf{C}_{\tilde{\mathcal{MB}}_{X_{i}}^{+}}\right)^{-1}\mathbf{b},
\]
where $\mathbf{b}$ is the covariance vector between $X_{i}$ and
variables in $\tilde{\mathcal{MB}}_{X_{i}}^{+}$. Hence, from Theorem
3.2 in \cite{das2011submodular}, it is straightforward to obtain:
\begin{equation}
R_{X_{i},\tilde{\mathcal{MB}}_{X_{i}}^{+}}^{2}\geq\left(1-\exp\left(-\lambda_{\min}(\mathbf{C},2k')\right)\right)R_{X_{i},\mathcal{MB}_{X_{i}}}^{2},\label{eq:ptheo17}
\end{equation}
where $\ensuremath{R_{X_{i},\mathcal{MB}_{X_{i}}}^{2}}$ corresponds
to the optimal case. $R_{T,\mathbf{S}}^{2}$ is weakly sub-modular
\cite{das2011submodular}.

Let:
\[
A:=2\pi e\sigma_{X_{i}}^{2}-\exp(2H(X_{i}\mid\mathcal{MB}_{X_{i}})),\quad A':=2\pi e\sigma_{X_{i}}^{2}-\exp(2H(X_{i}\mid\tilde{\mathcal{MB}}_{X_{i}}^{+})).
\]
From Eq. \ref{eq:ptheo16} and Inequality \ref{eq:ptheo17}, it follows
that
\begin{equation}
\frac{A}{A'}=\frac{R_{X_{i},\mathcal{MB}_{X_{i}}}^{2}}{R_{X_{i},\tilde{\mathcal{MB}}_{X_{i}}^{+}}^{2}}\leq\frac{1}{1-\exp(-\lambda_{\min}(\mathbf{C},2k'))},\label{eq:ptheo18}
\end{equation}
where $\mathbf{C}$ is the joint correlation matrix, i.e., the joint
covariance matrix $\mathbf{\Sigma}$ of all variables after standardization.

The lower bound of $1$ in Inequality \ref{eq:ptheo3} follows since
$\mathcal{MB}_{X_{i}}$ is the true minimizer of $H(X_{i}\mid\mathbf{S})$.
$\blacksquare$

\subsection{Proof of theorem \ref{thm:soundness-cemb}}\label{subsec:proof_c}

\textbf{Theorem }\ref{thm:soundness-cemb}\textbf{.} Assume infinite
observational data, faithfulness, causal sufficiency, and that $\mathcal{F}_{\theta}^{*}$
is a universal conditional entropy approximator. Then Algorithm \ref{algo:naigmb}
identifies exactly all variables in the Markov boundary of a target
node.

\textit{Proof.}

By our Theorem\textbf{ }\ref{thm:Bound}, under the stated conditions,
the growing phase of Algorithm \ref{algo:naigmb} guarantees that
$\tilde{\mathcal{MB}}_{X_{i}}^{+}$ contains the true Markov boundary
if allowed to grow sufficiently large, i.e., $\mathcal{MB}_{X_{i}}\subseteq\tilde{\mathcal{MB}}_{X_{i}}^{+}$.
Notably, for sparse Bayesian networks, the selection process may terminate
early, yielding a relatively small $\tilde{\mathcal{MB}}_{X_{i}}^{+}$
compared to the total number of variables.

We next demonstrate that the shrinking phase eliminates all extraneous
variables, i.e., each $X_{j}\in\tilde{\mathcal{MB}}_{X_{i}}^{+}\setminus\mathcal{MB}_{X_{i}}$,
via induction on their count. At each iteration, a variable not in
$\mathcal{MB}_{X_{i}}$ is removed from $\tilde{\mathcal{MB}}_{X_{i}}^{+}$,
and this process continues until convergence. This follows directly,
as removing any non-Markov boundary variable does not increase $H(X_{i}\mid\tilde{\mathcal{MB}}_{X_{i}}^{+})$,
which already attains the minimum conditional entropy achievable (according
to our Theorem \ref{thm:min_hcond}). $\blacksquare$
\label{sec:app_a}

\section{Reproduction details}

\textbf{\textit{Hyperparameter choice.}} We perform a grid-search
to adopt best configurations. \textit{Hyperparameters of }\textbf{\textit{FANS}}\textit{:}
Transformer is a deep sigmoidal flow \cite{huang2018neural}, with
1 layer and $w,a,b$ as $4$-dimensional vectors. Models are trained
for up to 5000 epochs. Batch size is 64 for graphs with $d<100$ and
256 for $d\geq100$. In conditioner, output layer has 20 blocks for
$d<100$ and 16 for $d\geq100$. One hidden layer with $\alpha_{B}=6$
blocks is used for all $d$. For $d\geq100$, compact version of \textbf{FANS}
with $2*M$ nodes per hidden block is used; non-compact version is
used for $d<100$. Maximum size of observed subsets: $M=d-1$ for
$d\leq20$, $M=20$ for $30\leq d<100$, and $M=30$ for $d\geq100$.
\textit{Hyperparameters of Algorithm} \ref{algo:naigmb}: $K=1000$,
$\epsilon_{g}=0.005$, $\epsilon_{s}=0.002$ for sparse graphs (node
degree is less than 2); $\epsilon_{g}=\epsilon_{s}=0.001$ for dense
graphs, $\rho=50$ for $d=5000$ and $\rho=15$ otherwise. Hyperparameters
for DAG estimation in \textbf{FANSITE-DCD} set to defaults of SDCD
\cite{nazaret2024stable}. Real/semi-real datasets are standardized
beforehand.\textbf{\textit{ }}

\textbf{\textit{Technical details.}} All experiments were conducted
on Apple M3 CPU and A100 GPU.
\label{sec:app_b}

\end{appendices}
\end{document}